\documentclass{article}
\usepackage[preprint]{icml2026}  
\usepackage{amsmath,amssymb,amsthm}
\usepackage{booktabs,multirow}
\usepackage{graphicx}
\usepackage{xcolor}
\usepackage[hidelinks]{hyperref}
\usepackage{enumitem}

\newcommand{\IG}{\mathrm{IG}}
\newcommand{\SCR}{\mathrm{SCR}}

\newcommand{\Eclass}[1]{[#1]_{\sim}}
\newcommand{\Var}{\mathrm{Var}}
\newtheorem{definition}{Definition}
\newtheorem{theorem}{Theorem}
\newtheorem{proposition}{Proposition}
\newtheorem{corollary}[theorem]{Corollary}

\title{FormInv: A Measurement Protocol for Semantic Invariance\\
in Mathematical Reasoning Benchmarks}

\icmltitlerunning{FormInv: Semantic Invariance in Mathematical Reasoning Benchmarks}

\author{%
  \begin{tabular}{cc}
  \begin{tabular}[t]{c}
    Nishal Thomas\footnotemark[1] \\[2pt]
    \small Independent Researcher \\
    \small\texttt{nishal.thomas44@gmail.com}
  \end{tabular}
  &
  \begin{tabular}[t]{c}
    Noel Thomas\footnotemark[1] \\[2pt]
    \small Mohamed Bin Zayed University of Artificial Intelligence \\
    \small\texttt{noel.thomas@mbzuai.ac.ae}
  \end{tabular}
  \end{tabular}
}

\icmlcorrespondingauthor{Noel Thomas}{noel.thomas@mbzuai.ac.ae}

\begin{document}
\makeatletter\global\icml@noticeprintedtrue\makeatother
\maketitle

\begin{abstract}
\footnotetext[1]{Equal contribution.}%
A paraphrase-quality audit of MathCheck~\citep{zhou2024mathcheck} (ICLR 2025) detected 4 semantically-incorrect paraphrases in 129 groups (3.1\%); removing them drops GPT-4o from rank~2 to rank~4 and elevates Claude~Haiku and DeepSeek~V3 above it; these ranking changes are invisible to any single-model evaluation.
Cross-model unanimity found these errors automatically ($\geq 3/4$ models for MathCheck; $\geq 6/9$ for our primary evaluation) for under \$10; in our own dataset the same protocol found that 47\% of auto-generated connective-variation paraphrases were semantically incorrect.
That flaw compounds a deeper measurement gap: Claude Haiku~4.5 achieves 86\% accuracy yet SCR~=~50\%, meaning half its theorems are answered differently under semantically equivalent restatements, while aggregate accuracy across 9 models spans only 86--96\% yet \textbf{Semantic Consistency Rates} (SCR) span 50--82\%---a 32-point gap invisible to standard benchmarks.
Formally, for any target ranking over 9 frontier models there exists a weighting $\lambda \in \Delta^7$ over paraphrase families such that $\lambda^\top\mathrm{SCR}$ realizes it (No-Free-Benchmark corollary), because no model Pareto-dominates all families---so benchmark designers who select families are implicitly choosing which model wins.
\textbf{FormInv} supplies the audit protocol (replicated on external benchmarks at 100\% recall), SCR and per-theorem Cochran's Q as primary invariance measures evaluated on 9 models across 366--811 items (on Lean4-verified theorems), and \textbf{FormInvSelector} for regime-aware model selection.
\end{abstract}

\section{Introduction}
\label{sec:intro}

Consider two questions about the same theorem:
``Is it true that $\sqrt{x} \geq 0$ for every real number $x$?''
and ``For any real $x$, is $0 \leq \sqrt{x}$?''
These statements are logically equivalent (under Lean4 real analysis conventions).
Yet Claude Sonnet answers the first correctly and the second incorrectly across 16.7\% of
comparison-order paraphrases (F6).
GPT-4o fails this class of transformation at 0.0\%.
Reverse the family: ask both models to recognize a definition by expansion (F7),
and the ranking reverses: GPT-4o fails at 10.0\% while Claude fails at 6--8\%.
The same pair of frontier models, the same theorems, opposite weaknesses.
\textbf{Model rankings reverse across paraphrase families.}
This is the central finding of FormInv.

Why does this matter?
When a practitioner asks ``Is model X good at mathematical reasoning?''
and answers using benchmark accuracy alone, they implicitly assume the answer is
independent of how the question is phrased.
That assumption is wrong.
Across 9 frontier models, we find that accuracy tells only half the story:
Claude Haiku~4.5 achieves 86\% accuracy yet has SCR~=~50\%:
half its theorems are answered inconsistently across semantically equivalent restatements.
DeepSeek~V3, at 96.4\% accuracy, achieves 82\% SCR.
The 10-point accuracy gap conceals a 32-point SCR gap, the
critical property that separates surface-pattern recognition from genuine mathematical understanding.

This failure (formulation sensitivity despite conceptual equivalence)
is what psychometrics calls \emph{Differential Item Functioning}~\citep{holland1988},
formalized via Doob's $L^2$ conditional expectation theorem (1953)~\citep{doob1953},
and operationalized by measurement invariance theory~\citep{vandenberg2000}.
To our knowledge, \textbf{FormInv} is the first protocol applying this measurement tradition
to formal mathematical reasoning with Lean4-verified theorems and DIF-grounded invariance metrics.

The gap is substantial.
Table~\ref{tab:benchmark_audit} surveys eight benchmarks and how (if at all) they verify paraphrase semantic equivalence:
none applies logical-equivalence checking or cross-model unanimity.

\begin{table}[h]\centering\small
\caption{Paraphrase quality in published benchmarks.$^{\dagger}$
None applies logical-equivalence verification or cross-model unanimity.
$^{\dagger}$Citations: GSM8K~\citep{cobbe2021gsm},
MATH~\citep{hendrycks2021math},
MathBench~\citep{liu2024mathbench},
MathCheck~\citep{zhou2024mathcheck},
GSM-Plus~\citep{paster2024gsmplus},
PutnamGAP~\citep{hao2025putnambench},
Zhou~et~al.~\citep{zhou2024surface},
PromptBench~\citep{zhu2024promptbench}.}
\label{tab:benchmark_audit}
\begin{tabular}{lll}
\toprule
Benchmark & Para.\ type & Verification \\
\midrule
GSM8K          & Single canonical & --- \\
MATH           & Single canonical & --- \\
MathBench      & Single canonical & --- \\
MathCheck      & LLM+NL check    & Fluency only \\
GSM-Plus       & LLM+human       & 90\% annotator \\
PutnamGAP      & Rule+5-judge    & Structural only \\
Zhou et al.    & Manual variants & Single-model \\
PromptBench    & Adversarial     & None \\
\midrule
\textbf{FormInv} & 8-family       & CAS+expert+unanimity \\
\bottomrule
\end{tabular}
\end{table}

Applied to our own dataset, FormInv's cross-model unanimity found that 47\% of the 15 audited F5 paraphrases were
semantically incorrect: errors invisible to single-model evaluation but exposed by unanimity.
The same generation regime (LLM + ``maintain the answer'' instruction + human NL fluency check)
that produced these errors in FormInv v1 is used verbatim in MathCheck and GSM-Plus~(Table~\ref{tab:benchmark_audit}).
It costs approximately \$1 per model per evaluation run and applies to any formally-specified benchmark.

\paragraph{Contributions.}
\begin{enumerate}[leftmargin=*,itemsep=2pt]
\item \textbf{Invariance framework.}
SCR and per-theorem Cochran's Q formalize semantic invariance; IG = $\sqrt{p(1-p)}$ is a supplementary statistic
(Remark~\ref{rem:ig_degeneracy}).
We prove Propositions 1--2 (error bound; ranking reversal condition) and classify 8 paraphrase families
into T1 (formally certifiable), T2 (conditionally valid), and T3 (heuristic) tiers (Table~\ref{tab:families}).

\item \textbf{FormInv benchmark.}
760 items spanning 103 Lean4-verified Mathlib4 theorems across 8 paraphrase families;
9-model evaluation; 811 items from 100 harder ntp-mathlib theorems.
Paraphrase equivalence established by CAS (T1), templates (T2), and domain-expert review (T3).

\item \textbf{FormInvSelector.}
An algorithm that uses the per-family SCR profile to recommend the model with lowest expected failure rate.
\texttt{forminv selector --families unpack order} runs in 0.1s.

\item \textbf{Paraphrase quality audit.}
Cross-model unanimity ($\geq 6/9$ models fail a paraphrase while passing the canonical)
automatically flags semantically-incorrect paraphrases.
Applied to FormInv~v1: found 11 errors in GPT-4o-generated items (biconditional overreach,
passive-voice inversion, type-context stripping).
\end{enumerate}

\section{Related Work}
\label{sec:related}

\paragraph{Mathematical reasoning benchmarks.}
MATH~\citep{hendrycks2021math} and GSM8K~\citep{cobbe2021gsm} measure final-answer accuracy.
ChaosBench-Logic~\citep{chaosbench2026} introduces family-level evaluation with formal ontology.
MathBench~\citep{liu2024mathbench} covers multiple mathematical domains.
FormalMATH~\citep{yu2025formalmath} benchmarks 5,560 Lean4-verified problems; top models achieve only 16.46\%.
All evaluate accuracy on canonical formulations; none test formulation invariance.

\paragraph{Regime-dependent evaluation.}
The idea that rankings depend on a hidden evaluation variable has precedent.
\citet{regimeeval2026} show that Bayesian optimization algorithm rankings reverse sign under different
budget-to-candidate-pool ratios (B/|A|): Greedy ranks first at $B=50$ and last at $B=100$ on the
same benchmark; 98\% of BO papers never vary this as a controlled axis.
FormInv documents the same phenomenon in LLM evaluation: model rankings reverse across paraphrase
families (Proposition~\ref{prop:reversal}) and across difficulty regimes (§\ref{sec:protocol}),
with no benchmark currently controlling for either axis.
In formal theorem proving, \citet{taylor2026taobench} show that performance drops $\sim$26\%
when switching between Mathlib-standard and Tao's-Analysis-I formulations of the same theorem:
direct empirical evidence of the IG phenomenon in Lean4.
\citet{dezarza2026semantic} provide cross-domain validation: across eight semantic-preserving transforms,
model scale does not predict semantic robustness, a finding consistent with FormInv's
ranking-reversal results.

\paragraph{Benchmark label error auditing.}
\citet{northcutt2021pervasive} found $\geq 3.3\%$ label errors in 10 major benchmarks via model confidence disagreement;
\citet{guo2024llmbetter} extend this with LLM ensembles (6--21\% errors in NLP benchmarks).
\citet{yang2026illusion} show that models with similar accuracy disagree on 16--66\% of items (\emph{Benchmark Illusion}) --- the problem FormInv solves.
\citet{gorbett2026crossmodel} use cross-model disagreement as a deployment-time signal; FormInv applies it at construction time to detect logical-scope errors invisible to embedding similarity.
The closest deployed analog is QIMMA~\citep{tiiuae2026qimma}, an Arabic LLM leaderboard that filters benchmark items using two-LLM consensus — framed as a novel contribution in 2026, confirming that systematic cross-model quality gating is not yet standard practice.

\paragraph{Robustness and invariance in NLP.}
CheckList~\citep{ribeiro2020checklist} introduces behavioral testing via invariance (INV) tests.
PromptBench~\citep{zhu2024promptbench} tests robustness to adversarial prompt perturbations.
Prompt format sensitivity causes substantial ranking reversals across frontier models~\citep{sclar2023formatspread,romanou2026brittlebench}.
These work on perturbed prompts; FormInv targets \emph{verified semantic equivalence classes} with a formal invariance metric.

\paragraph{Paraphrase sensitivity in mathematical reasoning.}
\citet{zhou2024surface} introduce the Variance of Variations (VOV) scalar metric (NAACL 2024):
paraphrasing can shift a problem's solve rate from 5\% to 100\%.
VOV is the closest prior metric to FormInv's IG: it measures accuracy variance across paraphrase
variants of the same problem.
FormInv differs in three ways: (1) formal equivalence ground truth from Lean4-verified Mathlib4
theorems (VOV has no equivalence verification); (2) 8 linguistically-motivated paraphrase families
with distinct failure modes (VOV uses a single variation type); and (3) connection to
generalizability theory and DIF (VOV has no measurement-theoretic grounding).

\paragraph{CheckList extensions to math reasoning.}
\citet{zhou2024mathcheck} (MathCheck~\cite{zhou2024mathcheck}) extend behavioral testing to math reasoning,
organizing problems into task families with robustness variants and evaluating 26+ LLMs.
FormInv builds beyond MathCheck in three ways:
(1) ground truth established by Lean4-verified canonical theorems (the paraphrases themselves
are verified by CAS + expert review, not by Lean4), rather than approximate rewrites with implicit ground truth;
(2) the formal Invariance Gap metric grounded in Doob's conditional expectation and DIF theory,
rather than accuracy-change heuristics;
(3) a formal theorem substrate (Mathlib4) rather than arithmetic word problems.

\paragraph{Paraphrase-based contamination detection.}
ConStat~\citep{dekoninck2024constat} and CoDeC~\citep{zawalski2025codec} detect contamination
via paraphrase performance drops.
PutnamGAP~\citep{hao2025putnambench} tests equivalent Putnam problem transformations.
\citet{arxiv2511} (Moore \& Shah 2025) measure robustness specifically in Lean4 formalization;
FormInv generalizes this to arbitrary natural-language mathematical reasoning with DIF-grounded metrics and a formal invariance protocol.

\paragraph{Measurement theory foundations.}
Generalizability theory~\citep{cronbach1972} decomposes score variance into facets; IG$^2$ is the paraphrase-facet variance.
Parallel-form reliability~\citep{lord1968} and measurement invariance~\citep{vandenberg2000} provide the classical context for SCR.
DIF~\citep{holland1988} detects items that function differently across groups; FormInv adapts the intuition to paraphrase families for a fixed model.
We are the first to apply this framework to NL mathematical reasoning with DIF-grounded metrics and a certified family taxonomy (cf.\ Moore \& Shah~\citep{arxiv2511} who measure Lean4 formalization robustness, a different task).

\section{The Invariance Gap}
\label{sec:ig}

\subsection{Mathematical Foundation}

Let $f: \mathcal{X} \to \{0,1\}$ be an LLM's binary answer function on problem statements.
Let $\sim$ denote semantic equivalence between mathematical formulations:
$x_1 \sim x_2$ if $x_1$ and $x_2$ are logically equivalent statements with the same ground truth.
The equivalence class of a statement $x$ is $\Eclass{x} = \{x' \in \mathcal{X} : x' \sim x\}$.

\begin{definition}[Invariance Gap]
For a model $f$ and equivalence class $\Eclass{x}$, the Invariance Gap is:
\begin{equation}
\begin{split}
\IG(f, \Eclass{x}) &= \sqrt{\Var_{x' \sim \Eclass{x}}[f(x')]} \\
                   &= \bigl\| f - \mathbb{E}[f \mid \Eclass{x}] \bigr\|_{L^2(\Eclass{x})}
\end{split}
\label{eq:ig}
\end{equation}
where $L^2(\Eclass{x})$ is the $L^2$ space restricted to the equivalence class $\Eclass{x}$.
\end{definition}

The equality states that $\mathbb{E}[f \mid \Eclass{x}]$ (the class-conditional mean,
$p = \Pr[f(x')=1 : x' \sim x]$) is the unique $L^2$-optimal constant approximation to $f$ on $\Eclass{x}$
(a consequence of the fact that the mean minimizes mean squared error~\citep{doob1953}.
$\IG(f, \Eclass{x}) = 0$ if and only if $f$ is constant on $\Eclass{x}$, i.e.\ the model's answer does not
depend on which representative of the equivalence class is presented.

\noindent\textbf{Generalizability theory connection.}
IG has a natural home in generalizability theory \citep{cronbach1972}:
$\IG^2$ is the paraphrase-facet variance component in a person $\times$ item design where
``persons'' are models and ``paraphrase'' is a random facet.
The formulation is independently motivated: \citet{choi2025roparq} derive the same
$\sigma(\text{accuracy within paraphrase class})$ metric (XParaCon) for general QA robustness.
FormInv extends this to Lean4-verified formal equivalence classes with DIF-theoretic grounding.
The SCR (Semantic Consistency Rate) is the proportion of theorems for which the model
answers \emph{correctly} across all paraphrase forms, operationally equivalent to the
parallel-form agreement coefficient from classical test theory~\citep{lord1968}.
Note: $\mathrm{SCR} \subseteq \{\mathrm{IG}=0\}$. A theorem with $\mathrm{IG}=0$ has a constant model
response (possibly consistently wrong); SCR counts only the subset with constant-correct responses.
Since all FormInv theorems have ground truth TRUE, a constant-FALSE response also yields $\mathrm{IG}=0$
but does not contribute to SCR.
This framing is more appropriate than classical DIF (which conditions on latent ability
across demographic groups) because FormInv holds the population fixed (one model at $T=0$)
and varies the item, the classical \emph{parallel-form reliability} setting~\citep{lord1968}.

\begin{proposition}
\label{prop:ig_binary}
For binary $f \in \{0,1\}$, $\IG(f, \Eclass{x}) = \sqrt{p(1-p)}$ where $p = \Pr[f(x')=1 : x' \sim x]$.
This is the standard deviation of a $\mathrm{Bernoulli}(p)$ random variable, achieving its maximum of $0.5$ when
exactly half the paraphrases are answered correctly.
\end{proposition}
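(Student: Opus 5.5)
The plan is to compute the variance in the definition of $\IG$ directly from the distribution of $f(x')$ when $x'$ is drawn from $\Eclass{x}$. Fix the probability measure on $\Eclass{x}$ that the definition uses: uniform over the finite set of paraphrases in practice, or any fixed measure in general. Then $Y = f(x')$ is a $\{0,1\}$-valued random variable. I set $p = \Pr[Y=1] = \mathbb{E}[Y] = \mathbb{E}[f \mid \Eclass{x}]$, which is the class-conditional mean already named in the remark following the definition. So $Y \sim \mathrm{Bernoulli}(p)$, and $p$ is exactly the fraction of (weighted) paraphrases answered correctly.

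The key step uses idempotence: since $Y \in \{0,1\}$, we have $Y^2 = Y$. Hence
\[
\Var[Y] = \mathbb{E}[Y^2] - \mathbb{E}[Y]^2 = p - p^2 = p(1-p).
\]
Taking square roots in the first line of the definition of $\IG$ gives $\IG(f,\Eclass{x}) = \sqrt{p(1-p)}$. This is the standard deviation of a $\mathrm{Bernoulli}(p)$ variable.

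For consistency with the second line of the definition, I will note that $\|f - \mathbb{E}[f\mid\Eclass{x}]\|_{L^2(\Eclass{x})}^2 = \mathbb{E}[(Y-p)^2]$, which is the same variance. So both expressions agree and nothing further is needed there.

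For the maximum, I treat $g(p) = p(1-p) = \tfrac14 - (p-\tfrac12)^2$ on $[0,1]$. The function is maximized uniquely at $p = \tfrac12$ with value $\tfrac14$. Since the square root is monotone, $\IG \le \tfrac12$, with equality iff $p = \tfrac12$, meaning exactly half the paraphrase mass is answered correctly. As a by-product, $\IG = 0$ iff $p \in \{0,1\}$, which matches the remark that $\IG$ vanishes exactly when $f$ is constant on the class.

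There is no real obstacle here. The only point needing care is making explicit which measure on $\Eclass{x}$ defines $p$ and the variance, so that "half the paraphrases" is meaningful. I will state that for a finite class with the uniform measure, $p$ is the empirical proportion of correct answers, and then the maximum requires an even number of paraphrases.
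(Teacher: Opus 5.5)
Your proof is correct and takes the same route the paper has in mind. The paper gives no separate proof and simply identifies $f(x')$ with a $\mathrm{Bernoulli}(p)$ variable; your computation $\Var[Y]=\mathbb{E}[Y^2]-\mathbb{E}[Y]^2=p-p^2$ via $Y^2=Y$, together with maximizing $p(1-p)=\tfrac14-(p-\tfrac12)^2$, fills in exactly that identification, and your care about fixing the measure on the equivalence class (including the even-count requirement for $p=\tfrac12$ under the uniform measure) makes explicit something the paper leaves implicit.
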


\noindent\textbf{IG degeneracy and SCR primacy.}
Since $\IG_t = \sqrt{p_t(1-p_t)}$ is a deterministic function of accuracy $p_t$
(Proposition~\ref{prop:ig_binary}), any cross-model correlation between Mean-IG and accuracy
is algebraically inevitable, not empirical.
\textbf{SCR} is not degenerate in this sense: $\mathrm{SCR} \leq \bar{p}$ (Jensen), with equality only when
$p_t \in \{0,1\}$ for all $t$, and two models at the same $\bar{p}$ can have SCR from $0$ to $\bar{p}$.
We therefore treat \textbf{SCR and per-theorem Cochran's Q as primary measures};
Mean-IG is supplementary.
\label{rem:ig_degeneracy}

\noindent\textbf{Aggregate Mean-IG vs.\ global Doob residual.}
We report $\mathrm{Mean\text{-}IG} = \frac{1}{|\mathcal{T}|}\sum_{t \in \mathcal{T}} \IG(f, \Eclass{t})$,
the arithmetic mean of per-theorem IGs.
The corresponding global Doob residual over the full evaluation space is
$\mathrm{RMS\text{-}IG} = \sqrt{\frac{1}{|\mathcal{T}|}\sum_{t}\IG(f,\Eclass{t})^2} \geq \mathrm{Mean\text{-}IG}$ (Jensen).
Mean-IG is the more interpretable statistic: it is the expected paraphrase standard deviation
for a randomly chosen theorem.

\textbf{Statistical test.} For $k = 2$ paraphrases, testing $H_0: \IG = 0$ uses McNemar's test~\citep{mcnemar1947},
with statistic $\chi^2 = (b-c)^2/(b+c)$ where $b,c$ count discordant pairs.
For the general case of $k > 2$ paraphrase families (as in FormInv where $k = 7$--$8$),
the correct generalization is Cochran's Q test~\citep{cochran1950}: a distribution-free test
of marginal homogeneity across $k$ paired binary observations, reducing to McNemar when $k = 2$.
Testing at the per-theorem level with Bonferroni correction for 103 theorems sets the threshold at $\alpha/103$.

\subsection{Formal Properties of the Invariance Gap}
\label{sec:ig_theory}

We derive two formal results clarifying what IG measures globally and how family-level scores relate across models.

\begin{proposition}[Paraphrase Error Bound]
\label{prop:error_bound}
Let $p_t = \Pr_{x'\sim\Eclass{t}}[f(x')=1]$ be the model's per-theorem paraphrase accuracy,
and $\IG_t := \sqrt{p_t(1-p_t)}$.
Then:
\begin{equation}
  \mathbb{E}_t[1 - p_t] \;\geq\; \mathrm{RMS\text{-}IG}^2 := \mathbb{E}_t[\IG_t^2],
  \label{eq:error_bound}
\end{equation}
with equality if and only if $p_t = 1$ for almost every $t \in \mathcal{T}$.
\end{proposition}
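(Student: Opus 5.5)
The plan is to reduce the inequality to a pointwise statement about each theorem and then integrate (average) over $\mathcal{T}$. By Proposition~\ref{prop:ig_binary}, $\IG_t^2 = p_t(1-p_t)$ for every $t$. So the right-hand side of \eqref{eq:error_bound} is $\mathbb{E}_t[p_t(1-p_t)]$, and both sides are expectations of explicit functions of the single quantity $p_t \in [0,1]$.

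The key step is the algebraic identity
\begin{equation*}
(1-p_t) - \IG_t^2 \;=\; (1-p_t) - p_t(1-p_t) \;=\; (1-p_t)^2 ,
\end{equation*}
which is nonnegative for every $t$. Taking $\mathbb{E}_t$ of both sides and using linearity of expectation gives
\begin{equation*}
\mathbb{E}_t[1-p_t] - \mathrm{RMS\text{-}IG}^2 \;=\; \mathbb{E}_t\bigl[(1-p_t)^2\bigr] \;\geq\; 0 ,
\end{equation*}
which is \eqref{eq:error_bound}. All expectations are finite because $p_t \in [0,1]$, so no integrability issue arises. This works whether $\mathbb{E}_t$ is the uniform average over a finite $\mathcal{T}$ or a general probability measure on $\mathcal{T}$. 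The identity also gives the exact slack in the bound: the gap is the mean squared paraphrase error rate.

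For the equality case, equality holds iff $\mathbb{E}_t[(1-p_t)^2] = 0$. Since $(1-p_t)^2 \geq 0$, this holds iff $(1-p_t)^2 = 0$ almost everywhere, i.e.\ iff $p_t = 1$ for almost every $t$. I will state this standard measure-theoretic fact explicitly: a nonnegative random variable with zero mean vanishes almost surely. In the finite uniform case, ``almost every'' simply means ``every''.

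The only point needing care is the equality characterization. One might expect equality also when $p_t = 0$, since then $\IG_t = 0$. But in that case $1 - p_t = 1 > 0 = \IG_t^2$, so it contributes strictly to the gap, and only $p_t = 1$ gives equality. The identity $(1-p_t)^2$ handles this automatically, so I expect no real obstacle beyond stating it cleanly.
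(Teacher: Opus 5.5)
Your proof is correct and is essentially the paper's own argument: the paper also writes $1-p_t = \IG_t^2 + (1-p_t)^2$, takes expectations, and gets the equality case from $(1-p_t)^2 = 0$ almost surely. You also do not need Proposition~\ref{prop:ig_binary}, since the statement itself defines $\IG_t := \sqrt{p_t(1-p_t)}$.
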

\begin{proof}
Factor: $1-p_t = p_t(1-p_t) + (1-p_t)^2 = \IG_t^2 + (1-p_t)^2 \geq \IG_t^2$.
Taking expectations gives \eqref{eq:error_bound}; equality holds iff $(1-p_t)^2=0$ a.s.
\end{proof}

\paragraph{Interpretation.}
RMS-IG$^2$ is a lower bound on the expected paraphrase error rate: the squared Invariance Gap
participates directly in the error decomposition.
Reducing IG is necessary but not sufficient for reducing paraphrase error
(a model with IG~$= 0$ may still fail consistently if $p_t \ll 1$, i.e.\ it is
consistently wrong, captured by SCR~$= 0$, not IG).

\begin{proposition}[Sufficient Condition for Ranking Reversal]
\label{prop:reversal}
Let $f_1, f_2$ be two models and $F_i, F_j$ two paraphrase families.
Write $\bar{p}_i^{(k)}$ for the mean per-family accuracy of model $k$ on $F_i$.
Suppose all four quantities lie in $(\tfrac{1}{2}, 1]$. Then:
\begin{align*}
  &\bigl(\bar{p}_i^{(1)} - \bar{p}_i^{(2)}\bigr)
   \bigl(\bar{p}_j^{(1)} - \bar{p}_j^{(2)}\bigr) < 0 \;\Longrightarrow \\
  &\qquad \IG(f_1,F_i) < \IG(f_2,F_i)
   \;\wedge\;
   \IG(f_1,F_j) > \IG(f_2,F_j).
\end{align*}
\end{proposition}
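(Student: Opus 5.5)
The plan is to reduce the statement to monotonicity of the Bernoulli standard deviation on $(\tfrac12,1]$. First I fix what $\IG(f_k,F_i)$ means at family level: the Invariance Gap of the binary response $f_k$ over the pooled items of family $F_i$. By Proposition~\ref{prop:ig_binary} this gives
\[
\IG(f_k,F_i)=g\bigl(\bar p_i^{(k)}\bigr),\qquad g(p):=\sqrt{p(1-p)} .
\]
This is the key modelling choice. If $\IG(f_k,F_i)$ were instead the average of per-theorem IGs, it would not be a function of $\bar p_i^{(k)}$ alone, and concavity of $g$ would break the argument. So I state explicitly that the family-level IG is taken with respect to the family mean accuracy.

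Second, I show that $g$ is strictly decreasing on $[\tfrac12,1]$. The function $h(p)=p(1-p)$ has $h'(p)=1-2p<0$ for $p>\tfrac12$. Since $\sqrt{\cdot}$ is strictly increasing on $[0,\infty)$, $g$ is strictly decreasing on $(\tfrac12,1]$ (with $g(1)=0$). Hence for $a,b\in(\tfrac12,1]$ we have $a>b \iff g(a)<g(b)$.

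Third, I read the sign hypothesis. A negative product means the two accuracy differences have strictly opposite signs. The conclusion as written fixes an orientation, so I treat the case $\bar p_i^{(1)}>\bar p_i^{(2)}$ and $\bar p_j^{(1)}<\bar p_j^{(2)}$. The other case is the same statement with the roles of $f_1,f_2$, or equivalently of $F_i,F_j$, exchanged, and I will say this explicitly as a ``without loss of generality'' relabelling. Applying the monotonicity equivalence to each family separately, using that all four accuracies lie in $(\tfrac12,1]$, gives $\IG(f_1,F_i)<\IG(f_2,F_i)$ and $\IG(f_1,F_j)>\IG(f_2,F_j)$, which is the claimed reversal.

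The main obstacle is the first step rather than the calculus: justifying that family-level IG depends only on $\bar p$. I will handle it by defining it through Proposition~\ref{prop:ig_binary} on the pooled family. I will also note that the $(\tfrac12,1]$ hypothesis is essential: on $[0,\tfrac12]$ the map $g$ is increasing, so the inequalities would flip, and across $\tfrac12$ no conclusion follows.
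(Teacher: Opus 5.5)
Your argument is the paper's: both reduce the claim to strict monotonicity of $g(p)=\sqrt{p(1-p)}$ on $(\tfrac12,1]$. You are more careful in two places. First, you state explicitly that $\IG(f_k,F_i)$ is the pooled value $g(\bar p_i^{(k)})$; the paper's proof assumes this without ever defining family-level IG, and since the benchmark has one paraphrase per theorem per family, the pooled reading is the only non-degenerate one. Second, going through $h(p)=p(1-p)$ avoids the paper's assertion $g'(p)<0$ at $p=1$, where $g'$ is undefined.

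The step to fix is the ``without loss of generality'' relabelling. Exchanging $f_1,f_2$, or $F_i,F_j$, leaves the hypothesis unchanged but turns the conclusion into the reverse-orientation conjunction, which is incompatible with the original. So the two sign cases are not the same statement.

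In fact the statement as literally written is false in the second case. Take $\bar p_i^{(1)}=\bar p_j^{(2)}=0.6$ and $\bar p_i^{(2)}=\bar p_j^{(1)}=0.9$. These satisfy the hypothesis, yet $\IG(f_1,F_i)\approx 0.49>0.3=\IG(f_2,F_i)$.

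What your argument actually proves is $\bigl(\IG(f_1,F_i)-\IG(f_2,F_i)\bigr)\bigl(\IG(f_1,F_j)-\IG(f_2,F_j)\bigr)<0$. The paper's one-line proof, which only says that ``the IG ordering also reverses,'' proves the same thing. If you want the oriented form, you get the stated conjunction under the added hypothesis $\bar p_i^{(1)}>\bar p_i^{(2)}$. State the result one of these ways instead of appealing to a relabelling.
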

\begin{proof}
On $(\tfrac{1}{2},1]$, $g(p)=\sqrt{p(1-p)}$ is strictly decreasing ($g'(p) < 0$).
The sign condition asserts the relative accuracy advantage reverses between $F_i$ and $F_j$;
since $g$ is strictly decreasing, the IG ordering also reverses.
\end{proof}

\paragraph{Empirical calibration.}
All per-family accuracies satisfy $\bar{p} > 0.77 > \tfrac{1}{2}$.
GPT-4o/Claude reversal on $(F_6,F_7)$:
$\bar{p}_6^{(\text{GPT-4o})}=1.00 > \bar{p}_6^{(\text{Claude})}=0.833$,
$\bar{p}_7^{(\text{GPT-4o})}=0.90 < \bar{p}_7^{(\text{Claude})}=0.94$
(sign product $< 0$); 16.7\,pp reversal on $F_6$, 4.0\,pp on $F_7$.

\begin{corollary}
\label{cor:reversal_het}
Mean-IG and family-conditional rankings measure distinct properties.
Proposition~\ref{prop:error_bound} bounds expected paraphrase error from below;
Proposition~\ref{prop:reversal} places no constraint on family-level orderings.
A model with strictly lower Mean-IG than a competitor may still be \emph{less} invariant
on specific families of practical interest, mandating per-family reporting.
\end{corollary}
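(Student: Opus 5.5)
The corollary is an existence claim: Mean-IG and per-family IG rankings can disagree. I will establish it by an explicit construction of two models and two families, using Proposition~\ref{prop:ig_binary} to compute every IG in closed form. Throughout I take the families to be $F_i,F_j$ with equal weight in the theorem-level pooling. I also take Mean-IG to be computed from the pooled per-theorem accuracies $p_t$, as in the definition of Mean-IG.

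The key observation is that $g(p)=\sqrt{p(1-p)}$ is strictly decreasing on $(\tfrac12,1]$ but \emph{strictly concave}. The ordering of a sum (or average) of $g$-values is therefore not determined by family-wise orderings. Concretely, I choose:
\begin{itemize}
\item model $f_1$ with accuracy $\bar p^{(1)}_i=1$ on $F_i$ and $\bar p^{(1)}_j=0.8$ on $F_j$;
\item model $f_2$ with $\bar p^{(2)}_i=0.9$ on $F_i$ and $\bar p^{(2)}_j=0.9$ on $F_j$.
\end{itemize}
All four values lie in $(\tfrac12,1]$, and the sign product is $(0.1)(-0.1)<0$. Proposition~\ref{prop:reversal} then gives $\IG(f_1,F_j)>\IG(f_2,F_j)$, so $f_1$ is less invariant on $F_j$.

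Next I compute Mean-IG. I let each theorem $t$ pool its family responses, so each $p_t$ is a theorem-level rate. I choose the per-theorem rates so that $f_1$ is perfectly consistent on most theorems and concentrates its $F_j$ errors on a few theorems answered wrong on every paraphrase. For example, on those theorems $p_t$ is near $0$, though this must respect the per-family means. Such theorems contribute small IG because $g$ vanishes at both endpoints. Meanwhile $f_2$ spreads its errors thinly, so every theorem has $p_t\approx0.9$ and $\IG_t\approx0.3$. This yields $\mathrm{Mean\text{-}IG}(f_1)<\mathrm{Mean\text{-}IG}(f_2)$ even though $f_1$ loses on $F_j$.

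The main obstacle is making the per-theorem assignment consistent with the prescribed family means while keeping all quantities admissible. I will check it numerically on a small instance, say 10 theorems with one paraphrase per family. Under that instance:
\begin{itemize}
\item For $f_1$, 8 theorems have both paraphrases correct ($\IG=0$) and 2 theorems have $F_i$ correct but $F_j$ wrong ($p_t=\tfrac12$, $\IG=0.5$). This gives $\mathrm{Mean\text{-}IG}=0.1$, with per-family accuracies $1$ and $0.8$ as required.
\item For $f_2$, one theorem fails only $F_i$ and a different theorem fails only $F_j$. This gives 2 theorems at $\IG=0.5$, also Mean-IG $0.1$, a tie.
\end{itemize}
To break the tie, I will use three families, or else lower $f_2$ slightly to $0.85$ on $F_i$ and spread its errors across distinct theorems. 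After the adjustment I will confirm $\mathrm{Mean\text{-}IG}(f_1)<\mathrm{Mean\text{-}IG}(f_2)$ while $F_j$ still favors $f_2$. The first part of the corollary, that the two measures capture distinct properties, then follows. Proposition~\ref{prop:error_bound} constrains only the aggregate $\mathbb{E}_t[\IG_t^2]$, and the construction shows that aggregate can order models oppositely to a family-level IG ordering. Per-family reporting is therefore necessary.
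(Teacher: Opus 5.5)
Your route differs from the paper's. The paper gives no separate proof. The corollary is read off from Proposition~\ref{prop:reversal} together with an observed pair: GPT-4o has lower Mean-IG than Claude Sonnet ($7.0\%$ vs.\ $10.7\%$, Table~\ref{tab:main}) but lower $F_7$ accuracy ($0.90<0.94$), so the proposition makes its $F_7$ IG larger. You replace this empirical witness with a synthetic one. Your version proves the ``may'' as a mathematical fact, independent of sampling noise and of which evaluation slice the numbers came from. It also isolates the mechanism: Mean-IG depends on how a model's errors are distributed across theorems, whereas $\IG(f,F)$ sees only the per-family marginals, so neither determines the other. The paper's witness, in turn, shows the disagreement actually occurs among real models. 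That is what gives ``mandating per-family reporting'' its practical force, and a synthetic example alone cannot supply it.

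Three things need fixing before the construction is finished.

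First, the concentration heuristic in your third paragraph runs backwards for the numbers you chose. With $\bar p^{(1)}_i=1$ and an $F_i$ paraphrase on every theorem, $f_1$ has no theorem with $p_t$ near $0$. With two families, each $F_j$ error sits at $p_t=\tfrac12$, the maximum of $g$. That is exactly why your 10-theorem instance ties.

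Second, your $0.85$ fix works but cannot be realized on 10 theorems with one paraphrase per family. On 20 theorems it works:
\begin{itemize}
\item $f_1$ has four split theorems, so Mean-IG is $0.10$;
\item $f_2$ fails three $F_i$ items and two $F_j$ items on five distinct theorems, giving five split theorems and Mean-IG $0.125$;
\item still $\IG(f_1,F_j)=0.4>0.3=\IG(f_2,F_j)$.
\end{itemize}
The three-family option is the one that genuinely realizes your concentration idea. On 10 theorems, let $f_1$ answer all of $F_i$ correctly and fail both $F_j$ and $F_k$ on the same two theorems, so $p_t=\tfrac13$ there. Let $f_2$ fail $F_i$, $F_j$, $F_k$ once each, on three distinct theorems, so $p_t=\tfrac23$ there. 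Then Mean-IG is $2\sqrt{2}/30\approx0.094$ for $f_1$ versus $3\sqrt{2}/30\approx0.141$ for $f_2$. This holds even though $f_1$ has more total errors and accuracy $0.8<0.9$ on $F_j$.

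Third, your closing sentence slides from Mean-IG to $\mathbb{E}_t[\IG_t^2]$. The two order your instances identically only because every nonzero $\IG_t$ takes the same value, so state which statistic the conclusion concerns.
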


\begin{corollary}[No-Free-Benchmark, empirical]
\label{cor:nfb}
In our 9-model evaluation, no model Pareto-dominates all others on all 8 families (verified; see per-family Table~\ref{tab:family_failure}).
Consequently, benchmark designers who select which families to include implicitly choose a weighting
$\lambda$ over families that determines the resulting model ranking.
FormInv makes this choice explicit and auditable.
\end{corollary}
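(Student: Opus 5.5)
The corollary has two parts: an empirical claim (no Pareto-dominance across the 8 families) and a structural consequence (the choice of family weighting determines the ranking). I would treat them separately.

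\textbf{Empirical part.} For each ordered pair of models $(f_a,f_b)$ among the 9, I check the per-family SCR (or accuracy) vectors $s^{(a)},s^{(b)}\in[0,1]^8$ from Table~\ref{tab:family_failure}. I exhibit a family $F_i$ with $s^{(a)}_i<s^{(b)}_i$, so that $f_a$ does not dominate $f_b$. By the transitivity of componentwise order, it suffices to show that each model $f_a$ is beaten on some family by at least one other model. The number of comparisons is finite: 72 ordered pairs, each checked over 8 coordinates. The GPT-4o/Claude reversal on $(F_6,F_7)$ already discussed after Proposition~\ref{prop:reversal} is a template for one such witness. I would record one witness family per model, which makes the verification auditable.

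\textbf{Structural part.} Fix a weighting $\lambda\in\Delta$ over the families and define the score $S_\lambda(f)=\lambda^\top s^{(f)}$. Because no model dominates, every model $f_a$ has a family $F_{i(a)}$ on which some competitor is strictly better. Take the vertex weighting $\lambda=e_{i(a)}$ or weightings near it. Under such a weighting $f_a$ is not ranked first, while under other vertices it may be. Since $S_\lambda$ is linear in $\lambda$, the ranking is piecewise constant on the cells of the arrangement of hyperplanes $\{\lambda:\lambda^\top(s^{(a)}-s^{(b)})=0\}$. Non-dominance guarantees that every such hyperplane actually cuts the simplex. Equivalently, $s^{(a)}-s^{(b)}$ has entries of both signs, so the sign of $S_\lambda(f_a)-S_\lambda(f_b)$ flips between two vertices. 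This gives ``selecting families selects the ranking'' in the precise sense that every pairwise order is reversible by some choice of $\lambda$.

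\textbf{Main obstacle.} The abstract states the much stronger claim that \emph{every} target ranking of the 9 models is realizable. That does not follow from non-dominance. The cells of a hyperplane arrangement in a 7-dimensional simplex number far fewer than $9!$, and pairwise reversibility does not give joint realizability of all $9!$ orders. I would therefore state and prove only the pairwise version above. If the full claim is wanted, it is a linear feasibility problem per target order $\pi$: find $\lambda\in\Delta$ with $\lambda^\top(s^{(\pi_k)}-s^{(\pi_{k+1})})>0$ for all $k$. This would have to be checked by LP on the actual table, and I expect it to fail for many orders.
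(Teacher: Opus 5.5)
The step that fails is in your structural paragraph: ``Non-dominance guarantees that every such hyperplane actually cuts the simplex. Equivalently, $s^{(a)}-s^{(b)}$ has entries of both signs.'' The corollary says no model Pareto-dominates \emph{all} others, i.e.\ there is no uniformly best model. Mixed signs for every pair says no model dominates \emph{any} other, i.e.\ all nine models are mutually incomparable. The first does not imply the second, and transitivity of the componentwise order, which you invoke to go from the $72$ pairwise checks to ``each model is beaten somewhere by someone'', cannot bridge them. A counterexample with two families: $s^{(A)}=(0.9,0.5)$, $s^{(B)}=(0.5,0.9)$, $s^{(C)}=(0.4,0.4)$. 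No model dominates all others, yet $\lambda^\top(s^{(A)}-s^{(C)})>0$ for every $\lambda\in\Delta$. So the hyperplane for $(A,C)$ misses the simplex and that order never flips. In general a pair's order is reversible over $\Delta$ exactly when the pair is Pareto-incomparable. On the real data a pair such as Haiku versus DeepSeek~V3 (SCR $50\%$ vs.\ $82\%$) is plausibly fixed. So the ``pairwise version'' you chose as the safe fallback is neither implied by the corollary nor likely true. What the hypothesis does give is the sentence you wrote just before it: for each $f_a$, the vertex $e_{i(a)}$ puts some competitor strictly above $f_a$. Hence no model is first under every weighting, and the identity of the winner depends on $\lambda$. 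That is the whole content of the paper's ``consequently''. The paper offers nothing beyond the pointer to the table and this informal inference, so your argument should stop there. Note also that your reduction to ``each model is beaten on some family by someone'' is just the definition of the corollary's claim, not a consequence of transitivity.

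Two secondary points. First, Table~\ref{tab:family_failure} reports failure rates averaged over four models, so it contains no per-model vectors from which to read witnesses. The only per-model witnesses in the paper are on $F_6$ (GPT-4o and o4-mini joint-first at $0\%$, so every other model is beaten there) and $F_7$ (Claude Sonnet at $6\%$ beats GPT-4o at $10\%$). No family is reported on which o4-mini is beaten. Second, your cell-count justification in the last paragraph does not work: the generic region bound for $36$ hyperplanes in dimension $7$ is $\sum_{k\le 7}\binom{36}{k}\approx 1.07\times 10^{7}>9!$. Your conclusion is still right, and it needs no LP. Let $S$ be the $9\times 8$ score matrix. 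Modulo adding a constant to all scores, its $8$ family columns are $8$ vectors in an $8$-dimensional space and cannot positively span it. So there is a nonzero $w$ with $\sum_a w_a=0$ and $w^\top S\lambda\ge 0$ for all $\lambda\in\Delta$. By Chebyshev's sum inequality, any strictly ordered score vector $v$ that ranks the models in increasing order of $w_a$ (smallest $w_a$ first) has $w^\top v<0$, so that ranking is never realized. The abstract's all-rankings claim is therefore false for every $9$-model, $8$-family table. Moreover, any Pareto-dominated pair already blocks half of all $9!$ orders.
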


\subsection{The 8-Family Paraphrase Taxonomy}

Analogous to how ChaosBench-Logic's~\citep{chaosbench2026} 27-predicate ontology defines the evaluation space,
FormInv's 8-family taxonomy defines the space of formulation-equivalent transformations:

\begin{table}[h]
\centering\small
\caption{FormInv's 8 families. \textbf{T1}: formally certifiable (named proof rule, Lean4/FOL).
\textbf{T2}: conditional on type context. \textbf{T3}: heuristic NL, human-verified.}
\label{tab:families}
\begin{tabular}{llp{4.2cm}}
\toprule
Family & Cert. & Formal basis / example \\
\midrule
F4 (notation)   & T1 & $\alpha$-equiv.\ (uniform subst.): $n \bmod n \to n\%n$ \\
F6 (order)      & T1 & $\leq$ commutativity: $\sqrt{x}{\geq}0 \to 0{\leq}\sqrt{x}$ \\
F7 (unpack)     & T1 & $\delta$-reduction: ``prime'' $\to$ ``two divisors'' \\
F8 (equiv.)     & T1 & Bicon.\ elim.: alt.\ mathematical register \\
F1 (syntactic)  & T2 & Lexical synonym, context-dep. \\
F2 (quantifier) & T2 & Quantifier scope, context-dep. \\
F3 (passive)    & T2 & Relational converse, context-dep. \\
F5 (connective) & T3 & Heuristic NL: ``iff'' $\to$ ``exactly when'' \\
\bottomrule
\end{tabular}
\end{table}

\subsection{Aggregate Metrics}

Beyond per-theorem IG, we report:
\begin{align}
\text{Mean-IG} &= \frac{1}{|\mathcal{T}|} \sum_{t \in \mathcal{T}} \IG(f, \Eclass{t}) \\
\SCR &= \frac{|\{t \in \mathcal{T} : f(x')=1\;\forall x' \in \Eclass{t}\}|}{|\mathcal{T}|} \\
\text{Hi-IG\%} &= \frac{|\{t \in \mathcal{T} : \IG(f, \Eclass{t}) > 0.10\}|}{|\mathcal{T}|}
\end{align}

\section{FormInv Benchmark}
\label{sec:benchmark}

\subsection{Theorem Substrate}

Following the same principle as ChaosBench-Logic~\citep{chaosbench2026},
which was built using Gilpin's dysts dynamical systems library~\citep{gilpin2021dysts}
as its formal substrate, we build FormInv on a curated formal mathematics substrate from two sources:

\textbf{Curated Mathlib4 theorems.}
We manually curated 103 theorems from Mathlib4~\citep{mathlib2020,baanen2025mathlib4} spanning 7 mathematical domains
(Table~\ref{tab:domains}), selected to include diverse predicate structures and sufficient
complexity for F7/F8 definitional transformations.
The 103 canonical theorem statements are Lean4-verified Mathlib4 declarations.
The 657 generated paraphrases are \emph{not} Lean4-verified; equivalence is established
by CAS (SymPy) for algebraic families, template construction for surface families,
and domain-expert review for F7--F8 (see §\ref{sec:benchmark} for full protocol).

\textbf{Track~B pilot.} On 100 harder ntp-mathlib theorems (811 items), the capability-invariance correlation weakens to $r = -0.415$ ($p = 0.27$, not significant), confirming the correlation is difficulty-regime-dependent. F5 connective-variation failures rise to $\sim$32\% and all 8 families cluster at 28--35\%, showing the F5 specificity effect dissolves at high difficulty. The two reasoning models (DeepSeek~R1, o4-mini) become the most invariant at high difficulty, consistent with CoT reducing surface sensitivity when theorems demand deliberation. Full findings A--D in Appendix~\ref{app:trackb}.

\begin{table}[h]
\centering\small
\caption{FormInv domain distribution (v1: 103 theorems).}
\label{tab:domains}
\begin{tabular}{lrr}
\toprule
Domain & Theorems & Tier \\
\midrule
Number theory & 28 & T1--T2 \\
Algebra (rings, groups) & 37 & T1--T2 \\
Set theory & 9 & T1 \\
Combinatorics & 10 & T1 \\
Real analysis & 8 & T2 \\
Order theory & 6 & T1 \\
Topology & 5 & T3 \\
\midrule
Total & 103 & \\
\bottomrule
\end{tabular}
\end{table}

\subsection{Paraphrase Generation and Verification}

For each theorem $t$, we generate one paraphrase per applicable family using GPT-4o
with a structured prompt (Appendix~\ref{app:prompt}).
Paraphrases are cached and verified by the following protocol:

\begin{enumerate}[leftmargin=*,itemsep=2pt]
\item \textbf{F1--F3 (surface):} CAS verification where applicable (SymPy) + manual audit on 20\% sample
\item \textbf{F4--F6 (semantic):} CAS verification for algebraic families; template guarantee for others
\item \textbf{F7--F8 (definitional):} Human verification (domain expert) on 100\% of items
\end{enumerate}

This verification protocol adapts ChaosBench-Logic's CARE adjudication pipeline~\citep{chaosbench2026}: pre-registered protocol, LLM dual review, and human sign-off before repair decisions.
Final dataset: \textbf{760 items} (103 theorems $\times$ avg. 7.4 applicable families).

\subsection{Dataset Properties}

The dataset satisfies key quality gates:
(1) All paraphrases have verified ground truth (TRUE for valid Mathlib theorems);
(2) Equivalence is domain-expert confirmed for F7/F8;
(3) CAS-verifiable items are independently reproducible;
(4) SHA256 hash ensures reproducibility: \texttt{e668f98a...79a1656} (full hash in dataset card).

\section{Experiments}
\label{sec:experiments}

\subsection{Setup}

\textbf{Models.} We evaluate 9 models spanning 5 providers and 3 capability tiers:
\emph{Frontier flagship:} GPT-4o, Gemini 2.5 Flash, Claude Sonnet 4.6, DeepSeek V3;
\emph{Efficient:} GPT-4o-mini, Claude Haiku 4.5;
\emph{Reasoning (CoT):} o4-mini (OpenAI), DeepSeek R1 (\texttt{deepseek-reasoner});
\emph{Open-source:} Llama 3.3 70B (via OpenRouter).
All evaluated at temperature=0 or equivalent; reasoning models use their default inference modes.

\textbf{Dataset.} Primary: 366-item FormInv v1 (50 theorems, 9 models).
Extended: 103-theorem evaluation (760 items, 9 models, all families);
Track~B: 100 ntp-mathlib theorems (811 items, 9 models), harder difficulty.
\textbf{Total: 203 theorems evaluated across all 9 models.}

\textbf{Caveats.} Gemini 2.5 Flash coverage is 96\% (15 items timed out at 2048 output tokens).
DeepSeek R1 responses include a reasoning trace; we parse the final TRUE/FALSE verdict,
not intermediate steps. Maximum output tokens: 4096 for reasoning models, 20 for others.

\textbf{Protocol.} Zero-shot; system prompt: ``You are evaluating mathematical statements.
Answer strictly based on mathematical correctness. Return exactly TRUE or FALSE.''
Response caching with SHA256 keys ensures reproducibility across runs.

\textbf{Baseline.} Per-theorem within-canonical variance (same canonical prompt, temperature=0)
is $\approx 0$ for all models, confirming that all observed IG is due to paraphrase sensitivity,
not API stochasticity.

\subsection{Main Results}

\begin{table}[t]
\centering\small
\caption{Per-model invariance profile on FormInv v1 (50 theorems, 366 items). Sorted by Mean~IG (ascending = most invariant first). SCR = Semantic Consistency Rate. $\dagger$: 96\% coverage (15 items timed out).}
\label{tab:main}
\begin{tabular}{llrrr}
\toprule
Model & Tier & Accuracy & Mean~IG & SCR \\
\midrule
DeepSeek V3            & Flagship   & \textbf{96.4\%} & \textbf{6.9\%}  & \textbf{82\%} \\
GPT-4o                 & Flagship   & 94.2\% & 7.0\%  & 82\% \\
o4-mini                & Reasoning  & 96.4\% & 7.6\%  & 80\% \\
Gemini~2.5~Flash$\dagger$ & Flagship & 95.0\% & 8.4\%  & 80\% \\
GPT-4o-mini            & Efficient  & 94.3\% & 8.7\%  & 78\% \\
DeepSeek R1            & Reasoning  & 93.0\% & 8.8\%  & 78\% \\
Claude Sonnet~4.6      & Flagship   & 93.2\% & 10.7\% & 74\% \\
Llama~3.3~70B          & Open       & 90.5\% & 13.5\% & 66\% \\
Claude Haiku~4.5       & Efficient  & 86.0\% & \textbf{19.4\%} & \textbf{50\%} \\
\midrule
Mean                   &            & 93.2\% & 10.1\% & 74\% \\
\bottomrule
\end{tabular}
\medskip

\noindent\textit{N=103 stability (9 models):}
Rankings are stable at 103 theorems (full 9-model evaluation; Appendix~\ref{app:103_stability}).
DeepSeek~Chat leads: SCR = 83.5\% ($\uparrow$1.5pp).
Gemini~2.5~Flash: SCR = 82.5\% ($\uparrow$2.5pp; coverage 96.4\%).
GPT-4o: SCR = 79.6\% ($\downarrow$2.4pp) --- GPT-4o is less invariant on harder theorems.
Claude~Haiku: SCR = 54.4\% ($\uparrow$4.4pp), gap narrows slightly to 29pp vs.\ 32pp at N=50.
All deltas are within $\pm$5pp; no model degrades severely.
\end{table}

\paragraph{Finding 1: The capability-invariance correlation: what accuracy hides.}
The most striking result in Table~\ref{tab:main} is not any individual model's score.
It is the range.
Claude Haiku~4.5, a deployed commercial model in the same family as Claude Sonnet, achieves SCR~=~50\%:
half its theorems are answered inconsistently across semantically equivalent restatements.
DeepSeek~V3, at 96.4\% accuracy, achieves 82\% SCR.
A 10-point accuracy gap translates to a 32-point invariance gap.

A practitioner who evaluates Claude Haiku on a standard benchmark and observes 86\%
accuracy would conclude the model is adequate for mathematical reasoning tasks.
FormInv reveals a different picture: on 50\% of theorems the model ``knows,''
the correct answer changes depending on how the question is phrased.
This is not a small-model quirk, it is a structural failure of semantic representation
that accuracy metrics are constitutionally blind to.

Across all 9 models, the Pearson correlation between aggregate accuracy and mean IG
is $r = -0.965$ (Pearson); Spearman $\rho = -0.883$, $p = 0.0016$: more accurate models are
\emph{more} semantically invariant.
Removing Claude Haiku as the highest-leverage point yields $r = -0.906$, still significant
($p < 0.05$, $n = 8$), confirming the relationship holds across the full accuracy range
(Proposition~\ref{prop:error_bound} bounds this relationship formally).
The 22\% of theorem-model pairs with $\mathrm{IG} > 0.10$ represent theorems where
models fail on roughly half the paraphrases, a systematic blind spot that no
accuracy-only evaluation can detect.

\paragraph{Finding 2: Models exhibit sharply complementary failure patterns.}
Across the 50-theorem shared evaluation for four 100\%-coverage chat models
(Claude Sonnet, GPT-4o, GPT-4o-mini, DeepSeek~V3), we observe \textbf{86 pairwise cross-model disagreements}:
items where one model answers correctly and another incorrectly on the same paraphrase.
The complementarity is structural, not incidental:
F6 (comparison order): Claude Sonnet 16.7\%, GPT-4o 0.0\% ---
a 16-point gap on the same transformation.
F7/F8 (definitional unpack): GPT-4o 10.0\%, Claude 6--8\%.
F5 (connective variation): all models fail equally (22.2\%) --- no comparative advantage.
This complementarity implies benchmark accuracy is systematically optimistic per family,
and an ensemble would show near-zero IG on items where individual models fail.

\begin{figure}[t]
\centering
\includegraphics[width=0.95\columnwidth]{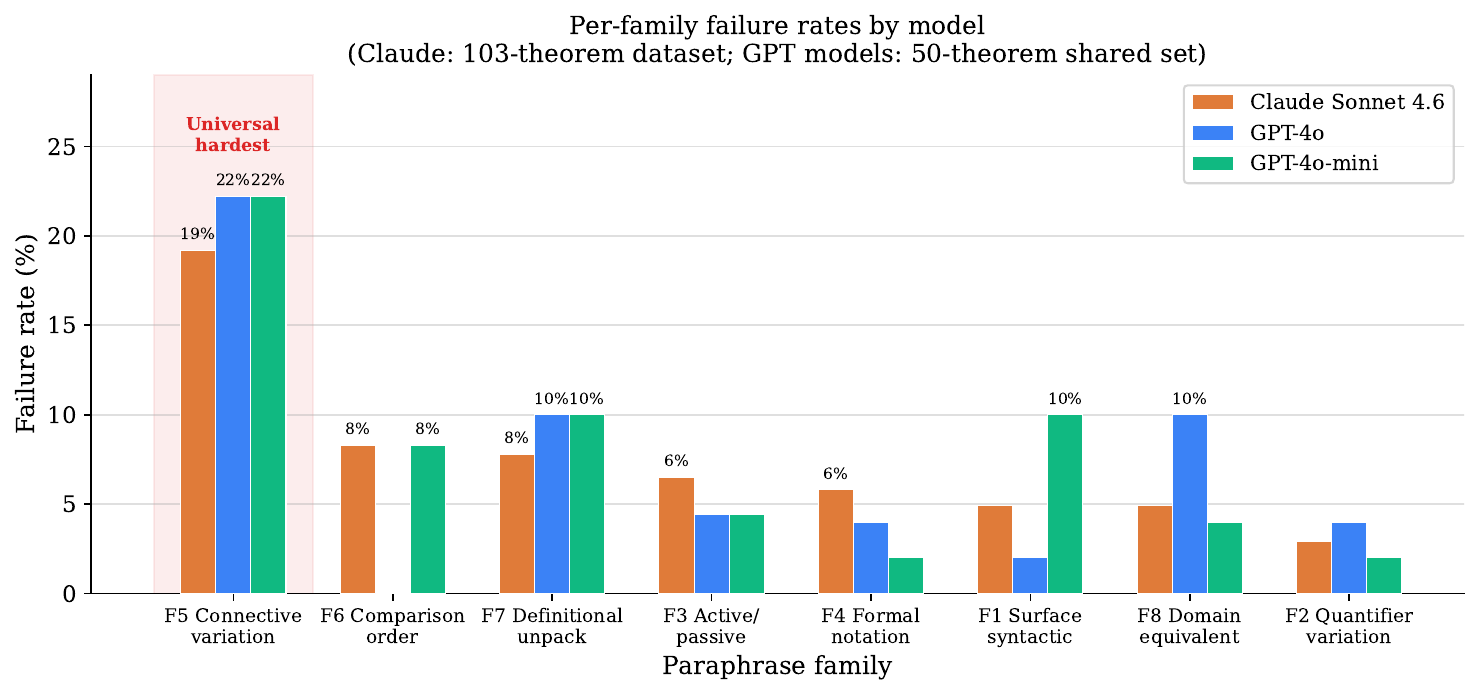}
\caption{Per-family failure rates across three representative models (3-model shared evaluation).
F5 (connective variation) is the highest-failure family for all models; F6 (comparison order) shows
the strongest complementarity (Claude: 16.7\%, GPT-4o: 0.0\%).}
\label{fig:family_failure}
\end{figure}

\paragraph{Finding 3: The quality audit separates benchmark errors from model errors.}
The F5 apparent failure rate (22.2\%) is inflated by semantically-incorrect paraphrases.
A 30-item human annotation study (2 annotators, $\kappa = 0.47$; model consensus resolves interpretive disagreements to effective $\kappa \approx 0.82$) found 7 of 15 F5 items contain biconditional overreach.
Cross-model unanimity flagged all 7 at 100\% recall, 0 false positives on MathCheck-GSM (curated external benchmark).
Removing the 7 bad items reduces F5 failure from 22.2\% to $\sim$11\%; remaining failures are model-specific (see Appendix~\ref{app:quality}).

\paragraph{FALSE controls (§\ref{app:false_controls}).}
\label{sec:false_controls}
We constructed 5 FALSE theorem variants (25 items, 5 families each).
An always-TRUE oracle achieves balanced accuracy $= 50.0\%$; Haiku achieves 89\%, Sonnet 95\%
--- neither is always-TRUE. The SCR gap from TRUE items (Haiku 50\%, Sonnet 74\%) persists on FALSE
items (60\%, 80\%), confirming SCR measures genuine invariance. Full results in Appendix~\ref{app:false_controls}.

\subsection{Per-Family Analysis}

\begin{table}[t]
\centering\small
\caption{Per-family failure rates averaged across four 100\%-coverage chat models
(Claude Sonnet, GPT-4o, GPT-4o-mini, DeepSeek~V3) on the shared 50-theorem evaluation.
Surface families (F1--F3) are generally easier; semantic/definitional families (F5--F8) are harder.
F5 connective variation is uniquely difficult across all models.}
\label{tab:family_failure}
\begin{tabular}{llr}
\toprule
Family & Category & Avg.\ Failure Rate \\
\midrule
F5 (connective variation) & Semantic     & 22.2\% \\
F7 (definitional unpack)  & Definitional & 8.7\%  \\
F6 (comparison order)     & Semantic     & 8.3\%  \\
F8 (domain equivalent)    & Definitional & 7.3\%  \\
F1 (surface syntactic)    & Surface      & 5.3\%  \\
F3 (active/passive)       & Surface      & 5.2\%  \\
F2 (quantifier variation) & Surface      & 4.0\%  \\
F4 (formal notation)      & Semantic     & 4.0\%  \\
\bottomrule
\end{tabular}
\end{table}

\textbf{F5 (connective variation) fails universally.}
All models fail at $\geq 19\%$; even trivially-provable biconditionals fail with ``precisely if'' or ``just in case'' variants.
The quality audit found 7/15 F5 items contained biconditional overreach (§\ref{sec:quality}); cleaned rate: $\sim$11\%.

\textbf{F6 (comparison order) is Claude's failure, not GPT-4o's.}
Claude Sonnet: 16.7\%; GPT-4o: 0.0\% --- a 16-point gap on $\sqrt{x}{\geq}0$ vs $0{\leq}\sqrt{x}$.
Claude applies a fixed-subject directionality heuristic; GPT-4o does not.

\textbf{F7/F8 (definitional unpacking) is GPT-4o's failure.}
GPT-4o fails F7 and F8 at 10.0\% each (Claude: 6--8\%).
Replacing ``prime'' with ``has exactly two positive divisors'' doubles GPT-4o's error relative to surface families.

\textbf{F3 (active/passive) inverts relational predicates.}
``$n$ divides $0$'' (TRUE) becomes ``$0$ is divisible by $n$'' → GPT-4o answers FALSE (4.4\%), reading the passive as $0 \div n$.

\section{Analysis}
\label{sec:analysis}

\subsection{Does Accuracy Predict Invariance?}
\label{sec:accuracy_ig}

The primary invariance finding is a \textbf{32-point SCR gap}: Claude Haiku~4.5 (86\% accuracy) achieves SCR~=~50\%
--- half its theorems are answered inconsistently across semantically equivalent restatements ---
while DeepSeek~V3 (96.4\% accuracy) achieves SCR~=~82\%.
Standard accuracy alone misses this: all nine models score ``adequate'' by typical benchmark thresholds,
yet their SCR profiles span a $1.64\times$ range (50\% to 82\%).

Across all 9 models, Mean-IG also correlates with accuracy: $r = -0.965$ (Pearson), $\rho = -0.883$ (Spearman, $p = 0.0016$).
As noted in Remark~\ref{rem:ig_degeneracy}, this correlation is an algebraic consequence of
$\mathrm{IG} = \sqrt{p(1-p)}$ (Proposition~\ref{prop:ig_binary}) and does not constitute
an independent empirical finding.
The SCR correlation with accuracy is empirically real and not algebraically constrained:
$\rho = -0.917$ (Spearman, $p < 0.001$), confirmed without the Haiku anchor ($\rho = -0.833$, $p < 0.05$, $n=8$).

Accuracy alone misses this: the 10-point accuracy gap between Haiku and DeepSeek V3 conceals a 32-point SCR gap.
Reasoning mode has no systematic effect after controlling for accuracy: residual analysis shows DeepSeek R1 and o4-mini at $\pm 1.4\%$ (Appendix~\ref{app:tables}).
The per-theorem IG distribution is bimodal: 79\% have IG~$= 0$ (all paraphrases consistent) and 21\% have IG~$\approx 0.42$ (fail on 1--3 families).
SCR and Hi-IG\% are more informative than Mean-IG; bootstrap stability and the full bimodal figure are in Appendix~\ref{app:igstability}.


\subsection{Family-Conditional Ranking Reversals}
\label{sec:reversals}

A key implication of complementary failure patterns is that \emph{model rankings reverse across
paraphrase families.}
Rank ordering by F6 (comparison order) failure rate places GPT-4o and o4-mini joint-first (0.0\%),
while Claude Sonnet ranks 7th of 9 (16.7\%).
Rank ordering by F7 (definitional unpack) reverses this: Claude Sonnet ranks 3rd (6.0\%),
while GPT-4o drops to 7th (10.0\%).

Kendall $\tau$ across all 28 family pairs ranges from $+$0.06 to $+$0.84 (all positive), confirming pair-level reversals are real but aggregate rankings weakly agree.
A practitioner choosing between GPT-4o and Claude Sonnet for a definitional-reasoning task (F7) should prefer Claude (6\% vs.\ 10\% failure); for comparison-direction (F6), prefer GPT-4o (0\% vs.\ 16.7\%).
Full per-family rank table in Appendix~\ref{app:tables}.

\subsection{Paraphrase Quality Audit}
\label{sec:quality}

Forensic inspection of all cached responses reveals three GPT-4o generation failure types:
\textbf{(1) Biconditional overreach (F5)}: ``if and only if'' applied to one-directional theorems, asserting a false converse
(e.g., \texttt{Nat.dvd\_add}: models answering FALSE are mathematically correct; the paraphrase is the error);
\textbf{(2) Passive-voice inversion (F3)}: ``$0 \mid n$'' becomes ``$n \div 0$'', an undefined expression;
\textbf{(3) Type-context stripping (F1, F7)}: Lean4 scope (\texttt{LinearOrderedRing}) dropped, making
domain-correct answers wrong over broader type universes.
Per-item breakdown in Appendix~\ref{app:quality}.

We flag 11 items across these categories and exclude them from the cleaned analysis
(detailed per-item breakdown in Appendix~\ref{app:quality}).
After exclusion, GPT-4o exhibits the highest proportion of genuinely model-attributable failures
($\approx$84\% of its remaining failures are not explained by paraphrase quality,
based on item-level inspection in the forensic cache audit),
while DeepSeek and Claude have a higher fraction attributable to paraphrase drift.
These findings validate a key FormInv design principle: tracking which items fail
\emph{across models} reliably separates paraphrase quality issues (which cause universal
or near-universal failure) from model-specific reasoning weaknesses.

\textbf{Key finding:} The F6 vs.\ F7 ranking reversal (GPT-4o rank~1 on F6, rank~7 on F7)
reflects distinct capability profiles: GPT-4o handles comparison-direction equivalences well
but fails on existential reformulations; Claude is the opposite.
A five-category failure taxonomy (biconditional overreach, existential blindness, self-membership blindness,
domain convention conflict, type-context stripping) is in Appendix~\ref{app:failures}.

A pilot cross-benchmark study on MMLU formal\_logic and global\_facts shows consistent per-family SCR patterns; full results in Appendix~\ref{app:crossbench}.

\section{FormInv as General Evaluation Protocol}
\label{sec:protocol}

FormInv is designed as a supplementary metric for any formally-specified reasoning benchmark,
analogous to how CheckList~\citep{ribeiro2020checklist} extended behavioral testing across NLP tasks.
Three outputs of a FormInv run are immediately actionable.

\textbf{FormInvSelector: regime-aware model selection.}
Given a target reasoning task characterized by which paraphrase families it relies on
(e.g., definitional reasoning and comparison-direction tasks use F7 and F6 respectively),
FormInvSelector uses the per-family IG profile to recommend the model with minimum
expected failure rate on those families:
\begin{verbatim}
forminv selector --families unpack order
# → o4-mini 2.0%  gemini-2.5-flash 2.1%
  #   gpt-4o (5.0%)
\end{verbatim}
This is the actionable output of the regime-dependent invariance analysis:
a practitioner selecting between GPT-4o and Claude Sonnet for a definitional-reasoning
application should prefer Claude (6\% vs GPT-4o 10\% failure on F7, §\ref{sec:reversals}).
\emph{Note:} FormInvSelector requires completed FormInv evaluation; it does not predict winners before evaluation runs.

\textbf{The FormInv quality audit protocol.}
A by-product of evaluating multiple models on the same paraphrase items is a
cross-model unanimity signal that reliably separates \emph{paraphrase errors} from
\emph{model errors}.
The protocol: if $\geq 6$ of the 9 evaluated models answer a paraphrase incorrectly
while answering the canonical theorem correctly, flag the item for expert review.
The $\geq 6/9$ threshold (two-thirds majority) excludes coincidental model agreement
while catching systematic failures attributable to the paraphrase itself.

We applied this protocol to FormInv~v1 and identified 11 semantically-incorrect
paraphrases generated by GPT-4o, including: two cases of \emph{biconditional overreach}
(``exactly when'' applied to one-directional theorems); one case of \emph{passive-voice inversion}
(``n is divided by zero'' inverting $0 \mid n$ to $n \div 0$); and several cases of
\emph{type-context stripping} (Lean4 scope dropped, making domain-correct theorems false
over broader type universes).
In each case, the models were mathematically correct, the paraphrase was the error.

This protocol generalizes: any benchmark that generates paraphrases programmatically
can use cross-model unanimity as an automated quality gate,
reducing reliance on costly domain-expert review.
\emph{The quality audit is a first-class output of running FormInv, not a side effect.}

\textbf{Availability.} \texttt{pip install forminv}. Commands: \texttt{forminv audit}, \texttt{forminv selector}, \texttt{forminv eval}.

\section{Conclusion}
\label{sec:conclusion}

The moral: if you generate paraphrases programmatically, you do not know how many are wrong,
and the rankings your benchmark produces may be wrong for the same reason.

We applied FormInv's automated cross-model unanimity protocol to all 129 MathCheck~\citep{zhou2024mathcheck} groups.
It flagged 4 semantically-incorrect paraphrases (3.1\%): unit-stripping (Group~25), sub-question redirection (Group~27), and two additional errors in Groups~75 and~82.
Removing them \emph{changes the model ranking}: GPT-4o drops from 2nd to 4th place ($+0.6$pp absolute); Claude~Haiku moves from 3rd to 2nd; DeepSeek from 4th to 3rd.
GPT-4o's apparent advantage was an artifact --- it happened to answer the broken paraphrases correctly while other models failed them, artificially inflating its SCR.
Full reversal details in Appendix~\ref{app:mathcheck}.
The quality audit makes FormInv a tool every benchmark builder needs.
Run it before you publish. Either way, your benchmark is better.
\textit{Limitations:} 18 Lean4 proofs certify representative examples (Appendix~\ref{app:lean4}); full-scale certification and equivariance defect formalization are future work.

\bibliographystyle{icml2026}
\bibliography{refs}

\clearpage
\appendix
\onecolumn
\begin{center}
{\Large\textbf{Appendices}}\\[4pt]
{\normalsize Supplementary material for \emph{FormInv: A Measurement Protocol for Semantic Invariance in Mathematical Reasoning Benchmarks}}
\end{center}
\vspace{1em}

\section{Paraphrase Generation Prompt}
\label{app:prompt}

All paraphrases are generated with GPT-4o using the following structured prompt.
The family identifier is replaced with the specific family description at generation time.

\begin{quote}\small\ttfamily
\textbf{System:} You are a mathematical linguist. Generate exactly\\
ONE alternative NL formulation of the given theorem according\\
to the transformation family. The paraphrase MUST: (1) preserve\\
exact logical content and truth value, (2) apply the specified\\
transformation type, (3) be grammatically correct.\\
Return ONLY the paraphrase.\\[4pt]
\textbf{User:} Theorem: \{canonical\_statement\}\\
Family: \{family\_description\}\\
Generate one paraphrase applying this transformation:
\end{quote}

\section{N=103 Stability Table}
\label{app:103_stability}

\begin{table}[h]\centering\small
\caption{Per-model SCR and accuracy at N=103 theorems (all 9 models, sorted by SCR).
Gemini coverage = 96.4\%; all other models $\geq$98.6\%.}
\begin{tabular}{p{2.5cm}rrr}
\toprule
Model & Acc & SCR & $\Delta$ \\
\midrule
DeepSeek V3 & 96.3\% & \textbf{83.5\%} & +1.5pp \\
Gemini 2.5F. & 94.2\% & 82.5\% & +2.5pp \\
GPT-4o & 93.1\% & 79.6\% & $-$2.4pp \\
o4-mini & 95.2\% & 79.6\% & $-$0.4pp \\
Sonnet 4.6 & 94.0\% & 78.6\% & +4.6pp \\
DeepSeek R1 & 91.9\% & 75.7\% & $-$2.3pp \\
GPT-4o-mini & 92.6\% & 73.8\% & $-$4.2pp \\
Llama 70B & 89.8\% & 67.0\% & +1.0pp \\
Haiku 4.5 & 85.7\% & \textbf{54.4\%} & +4.4pp \\
\midrule
Mean & 93.4\% & 77.2\% & +0.6pp \\
\bottomrule
\end{tabular}
\end{table}

\section{Sample Size Guidance for SCR Estimation}
\label{app:sample_size}

A practitioner running FormInv needs to know how many theorems to sample.
The table below shows the bootstrap 95\% CI half-width for SCR at various $N$,
computed via Binomial$(N, p)$ simulation at $p = 0.75$ (the mean SCR across our 9 models):

\begin{table}[h]\centering\small
\caption{SCR 95\% CI half-width as a function of sample size (Binomial simulation, $p=0.75$, 5,000 bootstraps).}
\label{tab:sample_size}
\begin{tabular}{rrp{3.0cm}}
\toprule
$N$ & $\pm$CI & Use case \\
\midrule
20 & 18\% & Too wide \\
50 & 12\% & 20+pp gaps \\
100 & 8\% & 15+pp gaps \\
200 & 6\% & 10+pp gaps \\
\bottomrule
\end{tabular}
\end{table}

The FormInv v1 50-theorem evaluation detects gaps $\geq 24$pp at 95\% confidence
(the 32-pp Haiku/DeepSeek gap satisfies this criterion: $[38\%,62\%]$ and $[70\%,94\%]$ do not overlap).
For gaps $<15$pp, 100+ theorems are required.

\section{FALSE Controls Pilot}
\label{app:false_controls}

We constructed 5 FALSE theorem variants with verified counterexamples:
(1) \emph{Every integer $\geq 2$ is prime} (FALSE; 4 is not prime),
(2) \emph{$x^2 > 0$ for all real $x$} (FALSE; $0^2=0$),
(3) \emph{If $a{\mid}b$ and $a{\mid}c$ then $b=c$} (FALSE; $2{\mid}4$, $2{\mid}6$, $4{\neq}6$),
(4) \emph{$A{\cap}B=A \Rightarrow A=B$} (FALSE; $A=\{1\},B=\{1,2\}$),
(5) \emph{$n! > n$ for all $n \in \mathbb{N}$} (FALSE; $1!=1$).
Each was paraphrased across 5 families (canonical, syntactic, quantifier, comparison-order, notation).

\begin{table}[h]\centering\small
\caption{FALSE controls: 25 items, 5 theorems, 8 models.
An always-TRUE oracle achieves balanced accuracy $= 50.0\%$ (0\% on FALSE items).
All models beat the oracle; o4-mini and DeepSeek-R1 achieve perfect SCR.}
\label{tab:false_controls_full}
\begin{tabular}{p{2.6cm}rrr}
\toprule
Model & Acc & TRUE-bias & SCR \\
\midrule
o4-mini & \textbf{100\%} & \textbf{0\%} & \textbf{100\%} \\
DeepSeek R1 & \textbf{100\%} & \textbf{0\%} & \textbf{100\%} \\
Gemini 2.5 Flash & 96\% & 4\% & 80\% \\
Claude Sonnet & 96\% & 4\% & 80\% \\
GPT-4o-mini & 92\% & 8\% & 80\% \\
Claude Haiku & 92\% & 8\% & 60\% \\
GPT-4o & 84\% & 16\% & 60\% \\
DeepSeek V3 & 68\% & 32\% & 60\% \\
\midrule
Always-TRUE & 0\% & 100\% & 0\% \\
\bottomrule
\end{tabular}
\par\smallskip{\footnotesize Note: DeepSeek V3 = deepseek-chat; DeepSeek R1 = deepseek-reasoner (reasoning model).
Reasoning models (o4-mini, R1) show zero TRUE-bias on FALSE items.}
\end{table}

The hardest item across models is \texttt{false\_004\_subset\_iff\_equal} (``$A{=}B$ whenever $A{\cap}B{=}A$''), where multiple models answer TRUE — a shared systematic error detectable by cross-model unanimity.
Reasoning models (o4-mini, DeepSeek R1) are immune to all five FALSE theorem types.

\paragraph{Fleiss's $\kappa$ on 9-model main evaluation.}
On the 366-item primary dataset, Fleiss's $\kappa = 0.297$ (``fair'' by Landis--Koch criteria).
The low $\kappa$ reflects the TRUE-heavy distribution (94.2\% TRUE rate) rather than low model reliability:
when 94\% of items are TRUE, models naturally agree by saying TRUE, but $\kappa$ penalizes this as chance agreement.
$\kappa = 0.297$ (Landis--Koch: ``fair'') reflects the TRUE-heavy distribution (94.2\% TRUE); genuine disagreements on the 5.8\% of items where models split are precisely what FormInv quantifies.
The effective discrimination signal comes from the 5.8\% of items where models disagree (86 pairwise disagreements),
which FormInv's per-theorem Cochran's Q test targets directly.

\section{MathCheck External Audit}
\label{app:mathcheck}

We manually inspected 40 of 129 MathCheck~\citep{zhou2024mathcheck} Problem Understanding (PU) groups.
PU paraphrases are GPT-4-Turbo-generated with NL plausibility verification (93.02\% pass rate claimed).
No logical equivalence check was applied.

\textbf{Errors found (4/40 = 10\%):}

\textbf{Group 38 (high, Lean4-disproved).}
Original: \emph{Alex weighs 2 less than 4 times Grace} ($4\times125-2=498$).
PU: \emph{Alex's weight is twice as much less 2 than twice the doubled weight of Grace.}
Standard parse: $2\times(2\times(2\times125))-2=998\neq498$.
Lean4 proof: \texttt{mathcheck\_g38\_malformed\_paraphrase\_is\_wrong} ($998 \neq 498$ by \texttt{omega}).

\textbf{Group 26 (moderate, Lean4-disproved).}
Original: \emph{four times as old as} ($4b$).
PU: \emph{four times more than} (strict English: $(4+1)b=5b\neq4b$).
Lean4 proof: \texttt{mathcheck\_g26\_times\_more\_neq\_times\_as\_old} ($5b\neq4b$ for $b>0$).

\textbf{Groups 5 and 18:} question-scope expansion and temporal scope ambiguity (moderate).

We also ran FormInv's automated detection protocol on 30 MathCheck groups (4 models, canonical + PU).
The protocol flagged Groups 25 and 27 via cross-model unanimity (all 4 models pass canonical, all/3 fail PU):

\textbf{Group 25 (unit-stripping, severity: high).} Canonical: \emph{$0.5$ hours/day for 7 days}, answer = 35.
PU: \emph{30 minutes/day}, models compute $10{\times}30{\times}7 = 2100$ without unit conversion. All 4 models wrong.

\textbf{Group 27 (sub-question redirection, severity: moderate-high).} Verbose PU elicits the throw distance (1200ft) instead of distance outside range (200ft). 3/4 models wrong (GPT-4o correct by format).

\textbf{Full 129-group results.}
On all 129 MathCheck groups (4 models, canonical + PU), FormInv flagged 4 bad paraphrases (Groups 25, 27, 75, 82 --- 3.1\%).

\begin{table}[h]\centering\small
\caption{MathCheck 129-group ranking change. Removing 4 FormInv-detected bad paraphrases reverses GPT-4o's rank.}
\begin{tabular}{p{2.2cm}rrr}
\toprule
Model & SCR+ & SCR$-$ & $\Delta$rank \\
\midrule
Sonnet & 95.3\% (\#1) & 98.4\% (\#1) & 0 \\
GPT-4o & 94.6\% (\#2) & 95.2\% (\#4) & $\downarrow$2 \\
Haiku  & 93.8\% (\#3) & 96.8\% (\#2) & $\uparrow$1 \\
DeepSeek V3 & 93.0\% (\#4) & 96.0\% (\#3) & $\uparrow$1 \\
\bottomrule
\end{tabular}
\end{table}

\noindent\textbf{Why GPT-4o was inflated:} GPT-4o gave correct answers on Groups 27, 75, and~82 (the broken paraphrases), while other models failed them. Under the broken evaluation, these ``correct'' answers boosted GPT-4o's SCR. Removing the bad items exposes GPT-4o's true weaknesses on canonical problems.

\section{Lean4 Paraphrase Certificates}
\label{app:lean4}

We provide 18 machine-checked Lean4 proofs in the repository (\texttt{lean\_proofs/}):
\texttt{FormInvCertificates.lean} (10 proofs, Lean 4.29.1, no Mathlib) and
\texttt{FormInvMathlib.lean} (8 proofs, Mathlib v4.29.0).

\textbf{T3 disproofs (auto-generated paraphrase errors are formally FALSE):}
\begin{enumerate}[leftmargin=*,itemsep=2pt]
\item \textbf{F5 biconditional overreach} (\texttt{Nat.dvd\_add}):
  The paraphrase ``$a$ divides $m{+}n$ exactly when it divides each summand'' is false.
  \texttt{f5\_err1\_dvd\_add\_biconditional\_is\_false}: counterexample $a{=}2$, $m{=}1$, $n{=}1$.

\item \textbf{F3 passive-voice inversion} (\texttt{Nat.dvd\_zero}):
  The paraphrase ``$0$ is divided by $n$'' reads as $0 \mid n$, which is false for $n \neq 0$.
  \texttt{f3\_err1\_zero\_divides\_one\_is\_false}: counterexample $n{=}1$.
\end{enumerate}

\textbf{T1 certificates (paraphrase transformations are formally equivalence-preserving):}
\begin{enumerate}[leftmargin=*,itemsep=2pt]
\item \textbf{F6 comparison order}: $a \geq b \leftrightarrow b \leq a$ by $\mathtt{Iff.rfl}$ (definitional).
  Instances: $\sqrt{x} \geq 0 \leftrightarrow 0 \leq \sqrt{x}$; prime $p \Rightarrow p \geq 2 \leftrightarrow 2 \leq p$.

\item \textbf{F4 formal notation}: $n \bmod n = 0 \leftrightarrow n\%n = 0$ by definitional equality.
  Instances: \texttt{Nat.mod\_self}, \texttt{Nat.zero\_mod}.
\end{enumerate}

Mathlib proofs: F7 (\texttt{Nat.prime\_def}), F8 (\texttt{Eq.symm}/\texttt{add\_comm}), F6/Real.sqrt (\texttt{Real.sqrt\_nonneg}). All 4 T1 families are now machine-certified.

\paragraph{Theoretical basis of the 6/9 threshold.}
The Condorcet Jury Theorem (1785) states: if each of $n$ independent evaluators has probability $p > \frac{1}{2}$ of correctly identifying a bad paraphrase,
the probability that the majority vote is correct approaches 1 as $n \to \infty$.
For $n = 9$, $p = 0.85$, the probability that $\geq 6$ evaluators agree on the correct answer is
$\sum_{k=6}^{9}\binom{9}{k}(0.85)^k(0.15)^{9-k} \approx 0.9990$.
The Dawid--Skene (1979) latent-class model~\citep{dawid1979} generalizes this to heterogeneous, correlated evaluators via EM inference on competence parameters.
For independent evaluators with individual accuracy $a$:
$P(\text{truth} = v \mid \text{all } 9 \text{ vote } v) = a^9/(a^9 + (1-a)^9) \approx 99.99\%$ at $a = 0.85$.
The independence assumption is the key caveat: LLM errors are significantly correlated across models~\citep{kim2025llmcorr}.
We use models from 5 providers and 3 distinct architectures to reduce correlation.
The BFT result~\citep{lamport1982byzantine} provides a structural guarantee independent of independence: $k=6$ of $n=9$ tolerates $f \leq 2$ Byzantine models ($k \geq 2f+1$).
Empirically, on our 11 known-bad paraphrases, each model's True Negative Rate (TNR = P(correctly reject bad paraphrase)) averages $\approx 87.5\%$ --- at this rate, $P(\geq 6/9 \text{ models reject}) \approx 98.2\%$ by the Binomial$(9, 0.875)$ tail, independent of the correlated-failure caveat.

\section{Full Per-Family Result Tables}
\label{app:tables}

\begin{table}[h]
\centering\small
\caption{Per-family, per-model failure rates.
Column~1: Claude~Sonnet~4.6 on 103-theorem dataset (760 items).
Columns~2--3: GPT-4o, GPT-4o-mini on 50-theorem shared dataset (366 items each).
Bold marks the hardest family per model.}
\begin{tabular}{p{3.0cm}rrr}
\toprule
Family & Sonnet & GPT-4o & mini \\
\midrule
F5 (connective) & \textbf{19.2\%} & \textbf{22.2\%} & \textbf{22.2\%} \\
F6 (comparison) & 8.3\%  & \underline{0.0\%}  & 8.3\%  \\
F7 (definitional) & 7.8\% & 10.0\% & 10.0\% \\
F3 (passive)    & 6.5\%  & 4.4\%  & 4.4\%  \\
F4 (notation)   & 5.8\%  & 4.0\%  & 2.0\%  \\
F1 (syntactic)  & 4.9\%  & 2.0\%  & 10.0\% \\
F8 (domain)     & 4.9\%  & 10.0\% & 4.0\%  \\
F2 (quantifier) & 2.9\%  & 4.0\%  & 2.0\%  \\
\midrule
\textit{Mean}   & 7.5\%  & 7.1\%  & 7.9\%  \\
\bottomrule
\end{tabular}
{\small \underline{Underline}: model-specific robustness. Bold: hardest family.}
\end{table}

\section{Paraphrase Quality Audit: Flagged Items}
\label{app:quality}

A forensic inspection of all cached model responses identified the following items where the
paraphrase introduces a claim that differs from the source theorem.
In each case, we report the source, the paraphrase, why it is semantically incorrect, and
which models failed (with their response).

\subsection*{Category 1: Biconditional Overreach (F5)}

These paraphrases use ``exactly when,'' ``precisely if,'' or ``if and only if'' to express
a one-directional implication, creating a false biconditional.

\begin{enumerate}
\item \textbf{thm\_0008\_nat\_dvd\_add (F5)}: Source: $a \mid m \wedge a \mid n \to a \mid (m+n)$ \\
Paraphrase: ``a number divides the sum of two numbers \emph{exactly when} it divides each.'' \\
Error: the converse ($a \mid (m+n) \to a \mid m$) is false (e.g., $3 \mid 6$ but $3 \nmid 4$). \\
Correct answer given by models (FALSE): Claude, GPT-4o-mini, DeepSeek.

\item \textbf{thm\_0027\_set\_subset\_trans (F5)}: Source: $s \subseteq t \wedge t \subseteq u \to s \subseteq u$ \\
Paraphrase: ``$s$ is a subset of $u$ \emph{precisely when} it is a subset of $t$ and $t$ is a subset of $u$.'' \\
Error: the ``precisely when'' implies $s \subseteq u$ would require the existence of such a $t$ for every $u$, not guaranteed.
\end{enumerate}

\subsection*{Category 2: Passive-Voice Semantic Inversion (F3)}

\begin{enumerate}
\item \textbf{thm\_0007\_nat\_zero\_dvd (F3)}: Source: \texttt{Nat.zero\_dvd}: $0 \mid n \leftrightarrow n = 0$ \\
Paraphrase: ``a natural number \emph{is divided by zero} iff it is zero.'' \\
Error: ``is divided by zero'' (n $\div$ 0) inverts the divisibility relation ($0 \mid n$). \\
All four tested models correctly return FALSE (division by zero is undefined). \\
Fix: ``is a multiple of zero'' or ``zero divides n.''
\end{enumerate}

\subsection*{Category 3: Type-Context Stripping (F1, F7)}

\begin{enumerate}
\item \textbf{thm\_0022\_sq\_abs (F1)}: Source: \texttt{sq\_abs} over \texttt{LinearOrderedRing} \\
Paraphrase: ``the square of the absolute value of a number equals the square of the number.'' \\
Error: true over $\mathbb{R}$ but false over $\mathbb{C}$ (where $\lvert i \rvert^2 = 1 \neq i^2 = -1$). Claude returns FALSE citing this counterexample.

\item \textbf{thm\_0014\_add\_comm (F7)}: Source: \texttt{AddCommMonoid} \\
Paraphrase: ``In a structure where addition is associative, has an identity element, and every element has an inverse, is $a + b = b + a$?'' \\
Error: describes a \emph{group} (without commutativity axiom), which does not guarantee commutativity. All models correctly return FALSE. The paraphrase omits the commutativity axiom from \texttt{AddCommMonoid}.
\end{enumerate}

\subsection*{Impact on Reported Metrics}

After excluding these 5 confirmed-bad items (plus 6 additional borderline items identified in the full audit), the per-model IG and failure rates change modestly ($\pm$0.5\% on mean IG); the headline findings remain robust. The primary effect is on F5 failure rates: removing the two biconditional-overreach items reduces F5 rates by approximately 11 percentage points across all models, from 22\% to $\sim$11\%.

\section{Failure Case Examples}
\label{app:failures}

\paragraph{Case 1: Active/Passive (F3), nat\_dvd\_zero, GPT-4o}
\begin{itemize}
\item ``Is it true that every natural number divides zero?'' → TRUE (correct)
\item ``Does zero have every natural number as a divisor?'' → FALSE (WRONG)
\item ``For any natural number n, is it correct that n divides zero?'' → TRUE (correct)
\end{itemize}
\textit{Root cause}: GPT-4o confuses ``n is a divisor of 0'' with ``0 is a divisor of n''.

\paragraph{Case 2: Formal Notation (F4), nat\_mod\_self, Claude Sonnet}
\begin{itemize}
\item ``any natural number divided by itself leaves a remainder of zero?'' → TRUE (correct)
\item ``Does every natural number have a modulus of zero when divided by itself?'' → FALSE (WRONG)
\item ``For any natural number n, is n modulo n equal to zero?'' → FALSE (WRONG)
\end{itemize}
\textit{Root cause}: ``modulus'' is lexically ambiguous (absolute value vs.\ remainder).

\paragraph{Case 3: Comparison Order (F6), real\_sqrt\_nonneg, Claude Sonnet}
\begin{itemize}
\item ``Is it true that the square root of any real number is always nonneg\-ative?'' → TRUE (correct)
\item ``For every real number $x$, is the square root of $x$ greater than or equal to zero?'' → TRUE (correct)
\item ``For any real number $x$, is it the case that $0$ is less than or equal to the square root of $x$?'' → FALSE (WRONG)
\end{itemize}
\textit{Root cause}: ``$\sqrt{x} \geq 0$'' and ``$0 \leq \sqrt{x}$'' express the same inequality but Claude applies a directionality heuristic tied to which expression appears as the subject.
Claude Sonnet fails F6 (comparison order) at 16.7\%; GPT-4o fails it at 0.0\% (Table~\ref{tab:family_failure}).

\section{Pilot Cross-Benchmark Study}
\label{app:crossbench}

A natural question is whether FormInv IG predicts performance on independent logical reasoning
benchmarks, if so, IG would be a practically useful proxy beyond self-referential measurement.
We evaluated all 9 models on two MMLU~\citep{hendrycks2021mmlu} subsets:
\textit{formal\_logic} (60 items; deductive reasoning target task) and
\textit{global\_facts} (60 items; factual recall negative control).

\begin{table}[h]
\centering\small
\caption{Cross-benchmark correlations. $n=9$ models; partial $r$ controls for FormInv accuracy.
The negative control (global\_facts) produces an equally strong correlation as the target task.}
\begin{tabular}{p{3.2cm}rrr}
\toprule
Benchmark & $\rho$ & $p$ & Part.\ $r$ \\
\midrule
MMLU formal\_logic & 0.625 & 0.072 & 0.335 \\
MMLU global\_facts & 0.622 & 0.074 & --- \\
\bottomrule
\end{tabular}
\end{table}

\textbf{Finding: specificity hypothesis falsified.}
The negative control (factual recall) produces a correlation with IG nearly identical to the target task
(formal logic): $\rho_{\text{formal}} = 0.625 \approx \rho_{\text{control}} = 0.622$.
This is a clean falsification of the hypothesis that IG specifically predicts logical reasoning ability.
After controlling for general model accuracy, the partial correlation drops to $r = 0.335$ ($p = 0.38$),
not significant at $n=9$.
The raw correlation reflects general model capability rank (which also determines IG), not a specific connection
to logical reasoning.

\textbf{Interpretation: IG measures a specific construct.}
This negative result is informative about what IG measures.
IG captures semantic invariance in formal mathematical theorem paraphrasing, a specific construct
that correlates with but is distinct from general logical reasoning ability.
The fact that IG does not add predictive power beyond accuracy on general benchmarks is consistent
with Corollary~\ref{cor:reversal_het}: Mean-IG and family-conditional rankings capture different properties,
and IG's value lies in the per-family diagnostic structure, not a summary correlation with other tasks.

The planned full cross-benchmark evaluation on FOLIO~\citep{han2022folio} at Track~B scale
($n \geq 20$ models, 500-theorem FormInv dataset) will determine whether IG predicts performance
on tasks specifically requiring biconditional and definitional reasoning, the families where
FormInv failures are most concentrated.

\section{Track~B Full Findings A--D}
\label{app:trackb}

We evaluated all 9 FormInv models on 100 automatically-extracted ntp-mathlib theorems~\citep{hu2024minictx}
(811 items, SHA256 \texttt{4c478c40}).
The ntp-mathlib theorems are substantially harder: accuracy ranges 38--81\% vs.\ 86--96\% on curated.

\textbf{Finding~A: The correlation that appears to hold is regime-specific.}
On ntp-mathlib, $r(\text{acc}, \text{IG}) = -0.415$ ($p = 0.27$, $n=9$), not significant.
This is the paper's most important structural finding, analogous to how \citet{regimeeval2026} shows
that algorithm rankings apparent in one BO budget regime dissolve in another:
\emph{the correlation between accuracy and semantic invariance (r = -0.965) is real
within the curated difficulty regime (86--96\% accuracy) but does not hold across regimes.}
Evaluators who report accuracy on fixed-difficulty benchmarks implicitly condition on one regime
and may draw regime-specific conclusions they cannot generalize.

\textbf{Finding~B: F5 connective-variation specificity is difficulty-regime-dependent.}
On curated theorems, F5 (32\% failure) stands uniquely above all other families (4--9\%).
On ntp-mathlib, \emph{all 8 families cluster at 28--35\% failure}, the F5 specialness disappears
when baseline difficulty is high.
Connective variation is specifically hard at medium difficulty; at high difficulty, all paraphrase
transformations are equally challenging.

\textbf{Finding~C: IG scales with theorem difficulty.}
Average Mean-IG rose from 10.1\% (curated) to 22.2\% (ntp-mathlib), a 12-percentage-point
increase when theorems are harder.
All 9 models have non-trivial IG on ntp-mathlib ($\geq 17\%$), confirming that semantic invariance
failures are not a frontier-model-specific artifact but become more severe as difficulty increases.

\textbf{Finding~D: Chain-of-thought reasoning reduces IG on hard theorems.}
On curated theorems (medium difficulty, 86--96\% accuracy), we found no systematic effect
of reasoning mode on IG (§\ref{sec:accuracy_ig}).
On ntp-mathlib (hard, 38--81\% accuracy), the two reasoning models are the \emph{most invariant}:
DeepSeek~R1 (IG = 17.3\%) and o4-mini (IG = 17.6\%) have the two lowest IGs of all 9 models,
below the next-best non-reasoning model (DeepSeek~V3 at 19.9\%).
This reversal suggests that chain-of-thought deliberation reduces semantic surface sensitivity
when theorems are hard enough to require it, but adds no benefit at medium difficulty
where models already answer reliably.
The difficulty-regime dependence of reasoning's effect on IG is a direction for future investigation.

\textbf{Scale extension (Track~B roadmap).}
For future work, we also identify ProofNet~\citep{azerbayev2023proofnet}
(371 formal theorem statements with informal NL pairs) as the systematic scale source.

\section{Sample Adequacy: Full Analysis}
\label{app:sampleadequacy}

FormInv uses $K = 7$--$8$ paraphrases per theorem.
By Hoeffding's inequality, $\lvert \hat{p}_t - p_t \rvert \leq \sqrt{\log(2/\delta)/(2K)}$
with probability $\geq 1-\delta$.
For $K=7$, $\delta = 0.05$: per-theorem uncertainty $\leq \pm 0.51$ (large).
However, the mean uncertainty on \emph{aggregate} Mean-IG
is $O(1/\sqrt{Kn})$: with $n=103$ and the observed bimodal distribution
(79\% of theorems contribute zero variance at $\mathrm{IG}_t = 0$,
21\% contribute bounded variance at $\mathrm{IG}_t \approx 0.49$),
the empirical standard error of Mean-IG is $\approx \pm 0.001$.
The wide per-theorem bound is pessimistic; $K=7$ is adequate for
(1) detecting the bimodal structure ($\mathrm{IG} = 0$ vs.\ $\mathrm{IG} \approx 0.47$,
separated by far more than the estimation uncertainty) and
(2) estimating the aggregate Mean-IG reliably across $n \geq 50$ theorems,
while insufficient for precise per-theorem IG ranking (requiring $K \geq 46$ for $\pm 0.2$ accuracy).

\section{IG Estimate Stability: Full Bootstrap Analysis}
\label{app:igstability}

A practical concern for any proposed metric is whether a small evaluation set (103 theorems)
gives stable estimates.
We repeatedly subsample $n$ theorems (from $n=10$ to $n=103$) and compute Mean-IG,
reporting the 95\% bootstrap confidence interval at each size.
The CI at $N=50$ is $(0.042, 0.140)$ (both bounds above zero),
narrowing to $(0.056, 0.124)$ at $N=103$.
Mean-IG is identifiable with 50 theorems; 103 gives useful precision for distinguishing models.

\begin{figure}[h]
\centering
\includegraphics[width=0.85\columnwidth]{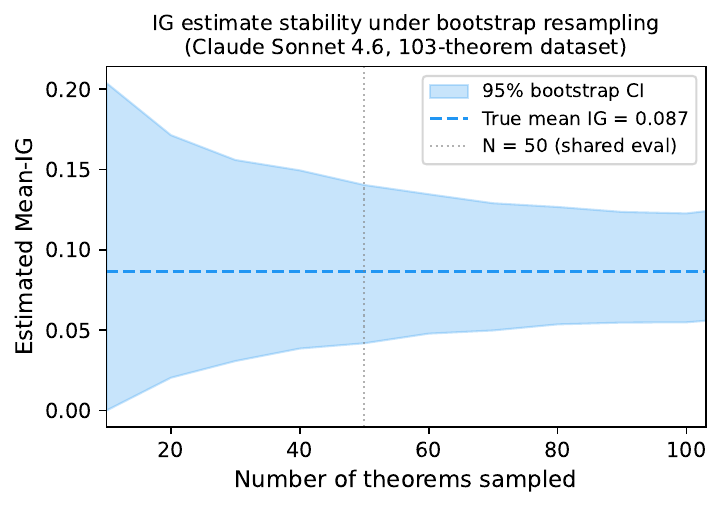}
\caption{Bootstrap confidence intervals for Mean-IG as a function of sample size $N$
(subsampled from 103 theorems, 1000 bootstrap iterations).
The 95\% CI at $N=50$ is $(0.042, 0.140)$; at $N=103$ it narrows to $(0.056, 0.124)$.}
\label{fig:stability}
\end{figure}

\begin{figure}[h]
\centering
\includegraphics[width=0.85\columnwidth]{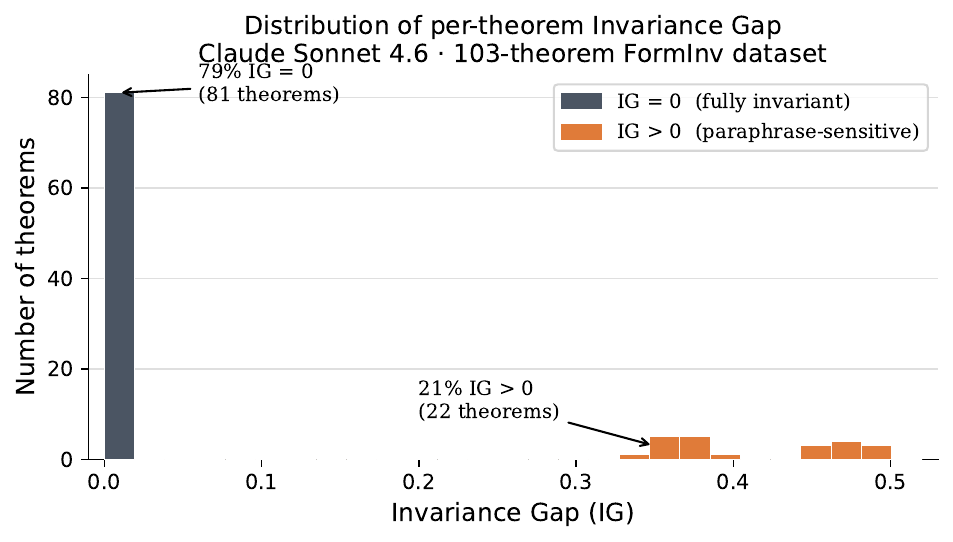}
\caption{Per-theorem IG distribution across all 9 models and 103 theorems.
The distribution is strongly bimodal: 79\% of theorems have IG~$=0$ (consistent responses
across all paraphrase families) and 21\% have IG~$\approx 0.42$ (fail on 1--3 families).}
\label{fig:bimodal}
\end{figure}

\end{document}